
\documentclass[10pt,twocolumn,letterpaper]{article}

\usepackage{cvpr}      
\usepackage{mathtools}
\usepackage{fontawesome}
\usepackage{amsthm}
%
\definecolor{cvprblue}{rgb}{0.21,0.49,0.74}
\usepackage[pagebackref,breaklinks,colorlinks,allcolors=cvprblue,bookmarks=false]{hyperref}
%
%

\newcommand{\lorenzo}[1]{}

\newcommand{\inv}[1]{#1^{\raisebox{.2ex}{$\scriptscriptstyle-1$}}}

\newcommand{\msqrt}[1]{#1^{\raisebox{.2ex}{$\scriptscriptstyle\frac{1}{2}$}}}
\newcommand{\vct}[1]{\boldsymbol{#1}\xspace}
\newcommand{\mat}[1]{\mathtt{#1}\xspace}
\newcommand{\vx}{\vct x}

\newcommand{\vmu}{\vct \mu}
\newcommand{\vE}{\mat\Sigma}
\newcommand{\viE}{\inv{\mat\Sigma}}

\newcommand{\fDray}{\mathcal D_{\text{ray}}}
\newcommand{\fDgs}{\mathcal D_{\text{gs}}}
\newcommand{\fD}{\mathcal D}

\newcommand{\fH}{\mathcal H}

\newcommand{\vmuh}{\hat{\vmu}}

\newcommand{\vv}{\vct v}

\newcommand{\vRv}{\mat R_{\vmuh\leftarrow\vv}}

\newcommand{\bij}{\mathrel{\mathop{\rightleftarrows}}} 

\newcommand{\myparagraph}[1]{

\vspace{3pt}\noindent\textbf{#1.}\xspace}

\newtheorem{proposition}{Proposition}[section]
\AtEndPreamble{
    \crefname{proposition}{Prop.}{Props.}
}

\newcommand\blfootnote[1]{%
  \begingroup
  \renewcommand\thefootnote{}\footnote{#1}%
  \addtocounter{footnote}{-1}%
  \endgroup
}

\title{
Hardware-Rasterized Ray-Based Gaussian Splatting
\vspace{-5pt}
}

\author{Samuel Rota Bul\`o, Nemanja Bartolovic, Lorenzo Porzi, Peter Kontschieder\\
Meta Reality Labs, Z\"urich
\vspace{-10pt}
}

\begin{document}
\maketitle
\begin{abstract}
We present a novel, hardware-rasterized rendering approach for ray-based 3D Gaussian Splatting (RayGS), obtaining both fast and high-quality results for novel view synthesis. Our work contains a mathematically rigorous and geometrically intuitive derivation about how to efficiently estimate all relevant quantities for rendering RayGS models, structured with respect to standard hardware rasterization shaders. 
Our solution is the first enabling rendering RayGS models at sufficiently high frame rates to support quality-sensitive applications like Virtual and Mixed Reality. Our second contribution enables alias-free rendering for RayGS, by addressing MIP-related issues arising when rendering diverging scales during training and testing. We demonstrate significant performance gains, across different benchmark scenes, while retaining state-of-the-art appearance quality of RayGS.\blfootnote{\faGithub\xspace \url{https://github.com/facebookresearch/vkraygs}} 
\end{abstract}

\vspace{-10pt}
\section{Introduction}
The advent of recently introduced image-based reconstruction methods like Neural Radiance Fields (NeRFs)~\cite{mildenhall2020nerf} and 3D Gaussian Splatting (3DGS)~\cite{kerbl3Dgaussians} has paved the way for a new era of photorealistic novel view synthesis in Virtual and Mixed Reality applications. Radiance fields can capture subtle nuances of real-world scenes, including fine-grained texture details, complex lighting phenomena, and transparent surfaces. In particular, 3DGS has a number of interesting properties including interpretability, flexibility, and efficiency for both training and real-time rendering. 

Once the scene models are reconstructed, fast and real-time rendering capabilities are required for applications like novel view synthesis. While the original 3DGS paper is already significantly faster compared to NeRFs, further works have addressed improving rendering speed~\cite{navaneet2023compact3d,fast-gauss,feng2024flashgs}. However, many of their target applications still center on rendering for 2D consumption or on small (mobile) screens, which allows for certain glitches in terms of rendering artifacts and overall quality. In contrast, we observed that VR applications set the bar significantly higher when it comes to rendering quality, otherwise breaking the immersion and thus the overall experience. 

A number of works have contributed to further improving 3DGS’ reconstruction quality, \eg, by using a better densification procedure~\cite{rota2024revising}, removing popping artifacts~\cite{radl2024stopthepop}, or leveraging a ray casting based approach for computing ray-Gaussian intersections~\cite{yu2024gaussian,hahlbohm2024efficient,huang20242d} rather than using the originally proposed splatting formulation. Many of these improvements are complementary in nature, but particularly ray-tracing based volume rendering of 3D Gaussians (RayGS)~\cite{huang20242d} has shown superior quality. The quality gain is mostly due to eliminating some of the approximations needed in traditional 3DGS, however, this comes at increased computational costs making it unsuitable for high frame rate applications based on consumer-grade hardware.

In our work we propose a novel and substantially faster renderer based on hardware-rasterization, while retaining the high quality of ray-based Gaussian Splatting. Hardware rasterization pipelines have been successfully demonstrated for standard 3DGS~\cite{vkgs}, enabling cross-platform usage based on using standard graphics pipeline components. For standard 3DGS, vertex shaders were used to determine the support area of each Gaussian primitive by means of the quad enclosing the minimum area of the corresponding, projected ellipse on the image plane. The fragment shader then computes the primitive’s opacity information, given the positional information computed by the vertex shader. Providing the analogous quantities for the RayGS case is non-trivial, and forms the core contribution of our paper. 
We propose a solution that yields the smallest enclosing quads in 3D space, which turns out to be approximately as fast as the hardware-rasterized variant for standard 3DGS, while retaining the higher quality of RayGS. We discuss the challenges of selecting the computationally most efficient solution out of infinitely many valid ones.

The second contribution of our paper addresses MIP-related issues in a RayGS formulation. Our solution enables alias-free rendering of images at diverging test and training scales, preventing undesirable artifacts. 
For a given and normalized ray, we marginalize the Gaussian 3D distribution on a plane orthogonal to the ray and intersecting its point of maximum density. This yields a 2D Gaussian distribution which can be locally smoothed to approximate the integral over the pixel area, and to further compute the rendered opacity at each point in the pixel area. Besides the theoretically correct solution, we derive an approximated one that can be efficiently integrated into our renderer.

To summarize, our work proposes a mathematically rigorous and geometrically intuitive derivation for deriving all quantities required for high-quality and fast, hardware rasterized, ray-based Gaussian Splatting. We additionally introduce a solution for handling MIP-related issues in RayGS, demonstrated by qualitative and side-by-side comparisons. Finally, we provide quantitative evaluation results, demonstrating that we can retain state-of-the-art appearance performance of RayGS-based models while obtaining on average approximately $40\times$ faster rendering performance on scenes from the MipNeRF360~\cite{barron2022mip} and Tanks\&Temples~\cite{knapitsch2017tanks} benchmark datasets.

\section{Related Works}

3D Gaussian Splatting (3DGS) was initially presented in~\cite{kerbl3Dgaussians}, and has since become a fundamental tool in Computer Vision and Graphics.
Thanks to its speed and ease of use, 3DGS has been applied to a wide variety of downstream tasks, including text-to-3D generation~\cite{chen2023text,tang2023dreamgaussian,yi2023gaussiandreamer}, photo-realistic human avatars~\cite{zielonka2023drivable,lei2023gart,kocabas2023hugs,saito2023relightable}, dynamic scene modeling~\cite{wu20234d,luiten2023dynamic,yang2023real}, Simultaneous Localization and Mapping~\cite{matsuki2023gaussian,yugay2023gaussian,yan2023gs,keetha2023splatam}, and more~\cite{guedon2023sugar,xie2023physgaussian,ye2023gaussian}.
In this section, we focus on two areas of research that are most closely related to our work: re-formulating 3DGS as ray casting, and improving its rendering performance.

\myparagraph{3DGS as ray casting} A few different works~\cite{yu2024gaussian,hahlbohm2024efficient,huang20242d,3dgrt2024,mai2024ever,radl2024stopthepop} have shown that rendering 3D gaussian primitives using ray casting can be a preferable alternative to splatting.
These work concurrently showed that ray-splat intersections can be calculated efficiently and, importantly, \emph{exactly}, as opposed to the approximation introduced by the original rasterization formulation in~\cite{kerbl3Dgaussians}.
The works in~\cite{yu2024gaussian} and~\cite{huang20242d} steer the training process towards learning 3D representations that more accurately follow the real geometry of the scene, by introducing regularization losses that exploit the more meaningful depth and normals that can be computed with the ray-splat intersection formulation.
Similarly, the works in~\cite{radl2024stopthepop} and~\cite{hahlbohm2024efficient} exploit ray-splat intersection to compute per-pixel depth values, that can be used to locally re-order the splats and avoid ``popping'' artifacts.
While the previous works implemented ray casting in the same CUDA software rasterization framework of~\cite{kerbl3Dgaussians}, a few others~\cite{3dgrt2024,mai2024ever} exploited Nvidia hardware to implement 3DGS rendering as a full ray-tracing procedure.
While generally slower than the others, these approaches unlock an entire new range of possibilities, \eg, physically accurate simulations of reflections and shadows, by tracing light propagation through a 3DGS scene.

\myparagraph{Improving 3DGS performance}
One of the main advantages of 3DGS compared to previous photorealistic 3D reconstruction approaches such as Neural Radiance Fields (NeRF)~\cite{mildenhall2020nerf}, is its render-time performance.
Even when compared to NeRF approaches specifically tuned for speed over quality~\cite{mueller2022instant}, 3DGS can still run up to one order of magnitude faster.
Nonetheless, real-time rendering complex 3DGS scenes using the original CUDA implementation from~\cite{kerbl3Dgaussians} can be infeasible when very high output resolutions are required, if computational budget is limited, or both (\eg on VR headsets).
Because of this, optimizing 3DGS rendering performance has been an active area of research in the past years, focusing on two main directions: model pruning and compression~\cite{morgenstern2023compact,niedermayr2024compressed,niemeyer2024radsplat,fan2023lightgaussian}, and fine-tuning the rendering logic~\cite{feng2024flashgs,vkgs,fast-gauss}.

In Radsplat~\cite{niemeyer2024radsplat}, Niemeyer~\etal propose a strategy to prune splats that don't significantly contribute to image quality, considerably reducing how many need to be rendered and thus increasing speed.
Since 3DGS is generally bottlenecked by GPU memory bandwidth, model compression can be exploited to both reduce model storage size and increase rendering speed, \eg by organizing splat parameters in coherent 2D grids to be compressed using standard image compression algorithms~\cite{morgenstern2023compact}, or by developing specific parameter quantization approaches~\cite{niedermayr2024compressed,fan2023lightgaussian}. 

In FlashGS~\cite{feng2024flashgs}, Feng~\etal present an in-depth analysis of the original differentiable 3DGS CUDA renderer, proposing many small optimizations which together contribute substantial speed improvements, particularly at training time.
The largest increase in rendering performance, however, can generally be achieved by abandoning the CUDA-based software rasterization paradigm (and thus the ability to differentiate through the renderer) in favour of hardware rasterization, in a way reminiscent of older works on rendering quadratic 3D surfaces~\cite{sigg2006gpu,weyrich2007hardware}.
To the best of our knowledge, this approach to 3DGS rendering has not been formally described in computer vision literature, but many different HW-rasterization implementations of 3DGS are available as open source software, such as~\cite{vkgs,fast-gauss}.
We conclude by mentioning also the work~\cite{Gumhold2003SplattingIE}, which introduces an HW-rasterization technique for splatting illuminated ellipsoids close in spirit to the one we propose in this paper.

\section{Preliminaries: Gaussian Splatting}
We provide a brief introduction to Gaussian Splatting~(GS)~\cite{kerbl3Dgaussians} and its ray-based variant (RayGS)~\cite{yu2024gaussian}, including implementation details of hardware-rasterized  GS.

\myparagraph{Scene representation} Gaussian Splatting introduces a scene representation expressed in terms of (3D Gaussian) \emph{primitives} $\mathcal S\coloneqq\{(\vmu_i,\vE_i,o_i,\vct\xi_i)\}_{i=1}^N$, each consisting of a center $\vmu_i\in\mathbb R^3$, a covariance matrix $\vE_i\in\mathbb R^{3\times 3}$, a \emph{prior} opacity scalar $o_i\in[0,1]$ and a feature vector $\vct\xi_i\in\mathbb R^d$ (\eg RGB color). 
The covariance matrix $\vE_i$ is typically parametrized with a rotation $\mat R_i$ and a positive-definite, diagonal matrix $\mat S_i$ as follows: $\vE_i\coloneqq \mat R_i\mat S^2_i\mat R_i^\top$.
We assume center and covariance to be expressed in the camera frame. 

\myparagraph{Scene rendering} Rendering a scene $\mathcal S$ on a given camera is formulated as a per-pixel, convex linear combination of primitives' features, \ie
\begin{equation}\label{eq:rendering}
\mathcal R(\vct x;\mathcal S)\coloneqq\sum_{i=1}^N\vct \xi_{\nu_i}\omega_{\nu_i}(\vct x;\mathcal S)\prod_{j=1}^{i-1}[1-\omega_{\nu_j}(\vct x;\mathcal S)]\,,
\end{equation}
where $\nu$ is a permutation of primitives that depends on the camera pose and camera ray $\vct x$, typically yielding an ascending ordering with respect to depth of the primitive's center. The term $\omega_i(\vx;\mathcal S)\in[0,1]$ can be regarded as the \emph{rendered} primitive opacity value, which is given by
\begin{equation}\label{eq:rend_opacity}
\omega_i(\vx;\mathcal S)\coloneqq o_i\exp\left[-\frac{1}{2}\fD(\vx;\vmu_i,\vE_i)\right]\,,
\end{equation}
where $\fD(\vx;\mu_i,\vE_i)$ is a divergence of $\vx$ from the primitive. This divergence takes different forms depending on the type of model we consider, namely GS or RayGS.

\myparagraph{Support of a primitive and its boundary} Given a primitive $(\vmu, \vE, o, \vct\xi)$, the set of camera rays $\vx$ for which the rendered opacity as per~\cref{eq:rend_opacity} is above a predefined, cut-off probability value $p_\text{min}$ is called the \emph{support} of the primitive. The support is camera-specific and can be characterized in terms of $\fD$ as the set of rays satisfying
\begin{equation}\label{eq:support}
\fD(\vx;\vmu,\vE)\leq\kappa\,,
\end{equation}
where $\kappa\coloneqq -2\log\left(\frac{p_\text{min}}{o}\right)$. Indeed, the relation holds if and only if $\omega_i(\vx;\mathcal S)\geq p_\text{min}$. The set of rays for which equality holds in~\cref{eq:support} forms the \emph{boundary} of the primitive's support.
Finally, if $\kappa\leq 0$, the support of the primitive is a null set and, hence, the primitive can be skipped since it is not visible given the provided cut-off probability. For this reason, we assume $\kappa>0$ in the rest of the paper.

\subsection{Gaussian Splatting}\label{ss:gs}
Let $\pi(\vx)$ be the camera projection function mapping a 3D point in camera space to the corresponding 2D pixel in image space. In GS the divergence is given by
\begin{equation}
\fDgs(\vx;\vmu,\vE)\coloneqq (\pi(\vx)-\pi(\vmu))^\top\inv{\vE_\pi}(\pi(\vx)-\pi(\vmu))\,,
\end{equation}
where $\vE_\pi\coloneqq\mat J_\pi\vE\mat J_\pi^\top\in\mathbb R^{2\times 2}$
with $\mat J_\pi\in\mathbb R^{2\times 3}$ being the Jacobian of the projection function $\pi$ evaluated at $\vmu$. This is the Mahalanobis distance between the pixel corresponding to $\vx$ and the 2D Gaussian distribution that is obtained from the primitive's 3D Gaussian distribution transformed via a linerization of $\pi$ at $\vmu$.
One advantage of this approximation is that the support of the primitive spans a 2D ellipse in pixel-space and can be rasterized in hardware. The disadvantage is that the support is misplaced with respect to the 3D Gaussian density, yielding unexpected artifacts, and the approximation assumes a pinhole camera model.

\subsection{Ray-Based Gaussian Splatting}\label{ss:ray_gs}
RayGS improves the rendering quality by dropping the approximation in GS due to the local linearization, which causes unexpected behavior (see~\cref{fig:artifacts}).
The idea is to render a primitive by considering the point of maximum Gaussian density along each camera ray. By doing so, the density of the rendered primitive behaves as expected, but the computational cost is higher. The divergence function underlying RayGS takes the following form
\begin{equation}\label{eq:Q_ray}
\fDray(\vx;\vmu,\vE)\coloneqq(\tau(\vx)\vx-\vmu)^\top\viE(\tau(\vx)\vx-\vmu)\,,
\end{equation}
where $\tau(\vx)\coloneqq\frac{\vx^\top\viE\vmu}{\vx^\top\viE\vx}$. Geometrically, $\tau(\vx)\vx$ is the point of maximum density along ray $\vx\in\mathbb R^3\setminus\{\vct 0\}$ (see~\cref{prop:tau}), and the divergence corresponds to the Mahalanobis distance between this point and the primitive's 3D Gaussian.

\myparagraph{Skipping cases} Primitives for which $c^2\leq\kappa$ holds with 
\begin{equation}\label{eq:c}
c\coloneqq\sqrt{\vmu^\top\viE\vmu}    
\end{equation}
have a support that spans the entire image (see~\cref{prop:full_support}). This intuitively happens because the camera is positioned inside the primitive. For this reason, we skip those cases and, therefore, we assume $c^2>\kappa$ in the rest of the section.

\subsection{Hardware-Rasterized GS}\label{ss:rasterized_gs}
One advantage of GS over RayGS is the straightforward mapping of the algorithm onto a traditional hardware-accelerated rasterization pipeline with programmable vertex and fragment shading stages. The idea is simple. Since the support of a primitive yields a 2D ellipse on the image plane, it is possible to enclose it with a \emph{quad}, \ie a $4$-sided polygon, spanning a minimum area. The role of the vertex shader is to compute the positions of the quad vertices and initialize vertex-specific features that are interpolated and transformed by the fragment shader to deliver a per-pixel RGBA color. A standard alpha-blending pipeline configuration combines the RGBA color from multiple pre-sorted quads to mimic the actual rendering equation in~\cref{eq:rendering}. Below, we review the vertex and fragment shaders and refer the reader to the code of~\cite{vkgs} for more details.

\myparagraph{Vertex shader} Given the eigendecomposition, $\vE_\pi=\mat U_{gs}\mat\Lambda_{gs}\mat U_{gs}^\top$, 
the vertices of the quad enclosing a primitive's support in image-space are given by
\begin{equation}\label{eq:Vgs}
\mat V_\text{gs}\coloneqq\mat T_\text{gs}\fH(\mat Z_\text{gs})\,,
\end{equation}
where 
$\mat T_{\text{gs}}\coloneqq
\begin{bmatrix}
\mat U_{\text{gs}}\msqrt{\mat\Lambda_{\text{gs}}}
&\pi(\vmu)
\end{bmatrix}\in\mathbb R^{2\times 3}$, function $\fH$ turns each column of the argument matrix into homogeneous coordinates, 
$\mat Z_\text{gs}\coloneqq\sqrt{\kappa}\mat O\in\mathbb R^{2\times4}$ and $\mat O\coloneqq\begin{bmatrix}-1&-1&1&1\\-1&1&1&-1\end{bmatrix}$ are the vertices of the \emph{canonical quad}, namely a centered 2D square.

\myparagraph{Fragment shader} The goal of the fragment shader is to compute the rendered primitive's opacity in \cref{eq:rend_opacity} by using hardware interpolation capabilities over vertex-specific quantities. Here, the relevant part of the opacity computation is $\fDgs(\vx;\vmu,\vE)$, which changes over pixels. Given any ray $\vx$ intersecting the primitive's quad, there exist interpolating coefficients $\vct\alpha\in\mathbb R^4$ such that $\fDgs(\vx;\vmu,\vE)=\fDgs(\mathcal H(\mat V_\text{gs}\vct\alpha);\vmu, \vE)$. Moreover, for any such $\vct\alpha$
we have that
\[
\fDgs(\mathcal H(\mat V_\text{gs}\vct\alpha);\vmu, \vE)=\Vert\mat Z_\text{gs}\vct\alpha\Vert^2\,.
\]
Accordingly, it is sufficient to interpolate $\mat Z_\text{gs}$ and use a simple dot product in the fragment shader to compute $\fDgs(\vx;\vmu,\vE)$ over the quad area. The resulting quantity is then fed to a pixel-independent function to get the rendered opacity as per~\cref{eq:rend_opacity}.

\section{Hardware-Rasterized RayGS}
In this section, we introduce the main contribution of the paper, namely showing how hardware rasterization can be used to efficiently render Gaussian primitives under a ray-based formulation. This allows to retain the advantage of GS, \ie faster rendering, and the better quality of RayGS, due to the dropped linear approximation.


Thanks to the linear approximation of the projection function, the support of a primitive in GS spans a 2D ellipse on the image plane and an optimal enclosing quad can be efficiently computed in the same space, as shown in~\cref{ss:rasterized_gs}. 
However, when it comes to RayGS, where we drop this approximation, the support of primitives can also span half-hyperbolas making the approximation with quads \emph{on the image plane} more complex and less efficient. 

To sidestep this limitation, we drop the restriction of seeking quads on the image plane and approximate the support of primitives with quads placed directly in 3D space. In fact, any quad intersecting all camera rays in the support of a primitive is a valid solution and there are infinitely-many ones. But, two valid quads do not necessarily share the same computational efficiency, so making the right choice here makes the difference.

In this section, we present a solution strategy using quads lifted in 3D space that can be computed efficiently as we will show later in the experimental section.
Mimicking \cref{ss:rasterized_gs}, we discuss how the vertex and fragment shaders are implemented in our solution, which are the distinctive parts of our contribution, while we omit the rest of the pipeline (\eg alpha-blending logic), for it is shared with the GS hardware-rasterized implementation~\cite{vkgs}.


\subsection{Vertex shader}
Consider a primitive with center $\vmu$ and covariance $\vE\coloneqq \mat R_p\mat S^2\mat R_p^\top$ that factorizes in terms of the scale matrix $\mat S$ and rotation matrix $\mat R_p$.
For a given camera view, we compute a quad that encloses the 3D points of maximum Gaussian density that we find along rays belonging to the boundary of the primitive's support, \ie the points belonging to the following set (see, \cref{fig:isomorphism}a):
\[
\mathcal E\coloneqq\left\{\tau(\vx)\vx\,:\, \fDray(\vx;\vmu,\vE)=\kappa,\,\vct x\in\mathbb R^3\setminus\{\vct 0\}\right\}\,.
\]
Determining such a quad is possible because $\mathcal E$ forms a 2D ellipse embedded in 3D space and, hence, is isomorphic to the unit circle $\mathbb S_1$. 
To grasp the geometrical intuition of why this is the case, we describe how we can map $\mathcal E$ to $\mathbb S_1$ and 
provide in \cref{fig:isomorphism} a schematic overview.
\begin{figure}[thb]
    \centering
    \includegraphics[width=.8\columnwidth]{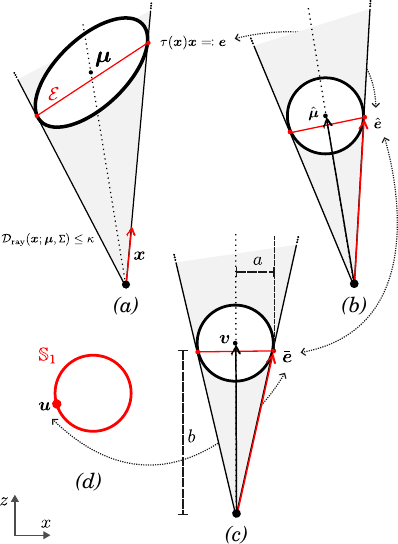}
    \caption{Schematic overview of the isomorphism between $\mathcal E$ and the unit circle $\mathbb S_1$, shown from the $(x,z)$-plane perspective.} 
    \label{fig:isomorphism}
    \vspace{-10pt}
\end{figure}

\myparagraph{Isomorphism $\Phi$ between $\mathcal E$ and $\mathbb S_1$}
We start with the primitive in its original space in \cref{fig:isomorphism}a.
By~\cref{prop:cond} we have that 
\begin{equation}\label{eq:rel1}
\vct e^\top\viE \vct e=\vct e^\top\viE\vct\mu=c^2-\kappa\,,
    \end{equation}
holds for all $\vct e\in\mathcal E$ with $c$ as defined in~\cref{eq:c}.
By setting $\vmuh\coloneqq\frac{1}{c}\inv{\mat S}\mat R_p^\top\vmu$ and $\hat{\vct e}\coloneqq\frac{1}{\sqrt{c^2-\kappa}}\inv{\mat S}\mat R_p^\top\vct e$, we can rewrite the right-most equality in~\cref{eq:rel1} as
\begin{equation}\label{eq:rel2}
\hat{\vct e}^\top\vmuh=b\coloneqq\sqrt{1-\frac{\kappa}{c^2}}\,.    
\end{equation}
The result of this space transformation is shown in \cref{fig:isomorphism}b. Since both $\hat{\vct e}$ and $\vmuh$ are points of the unit sphere $\mathbb S_2$, we have that all $\hat{\vct e}$ satisfying~\cref{eq:rel2} live on a circle embedded in 3D space. 
To make this mapping explicit, we rotate the space to align $\vmuh$ with the $z$-axis $\vct v=\begin{bmatrix}0&0&1\end{bmatrix}^\top$. To this end, we define a rotation matrix
$\vRv$ (see,~\cref{ss:rot}) such that $\hat\vmu=\vRv\vct v$.
By setting $\bar{\vct e}\coloneqq \vRv\hat{\vct e}$, we have that
\begin{equation}\label{eq:rel3}
\bar{\vct e}^\top\vv=b\,.        
\end{equation}
This transformation is depicted in \cref{fig:isomorphism}c. Since $\bar{\vct e}$ is still a vector of the unit sphere $\mathbb S_2$, and by~\cref{eq:rel3} its $z$-coordinate has to be $b$, we have that
\begin{equation}\label{eq:rel4}
\bar{\vct e}=b\fH\left(a\vct u\right)
\end{equation}
holds with $a\coloneqq\sqrt\frac{\kappa}{c^2-\kappa}$ and for a specific element $\vct u\in\mathbb S_1$ of the unit circle, which corresponds to the $(x,y)$-subvector of $\bar{\vct e}$ normalized to unit length (see~\cref{fig:isomorphism}d).
In summary, we have described a transformation chain 
\[
\vct e\in\mathcal E\bij\hat{\vct e}\in\mathbb S_2\bij\bar{\vct e}\in\mathbb S_2\bij\vct u\in\mathbb S_1
\]
mapping elements of $\mathcal E$ to the unit circle, which is both linear and invertible, thus showing that the two sets are isomorphic. From this we infer that all points in $\mathcal E$ actually live on a 2D ellipse embedded in 3D space and, therefore, can be enclosed with a quad placed on the same 3D plane the 2D ellipse belongs to. We denote by $\Phi$ the isomorphism from $\mathcal E$ to $\mathbb S_1$. The 3D center of the ellipse spanned by $\mathcal E$ can be computed as $\Phi^{-1}(\vct 0)$, \ie by back-mapping the center of the unit disk $\vct 0\in\mathbb R^2$.

\begin{figure}
    \centering
    \includegraphics[width=0.8\columnwidth]{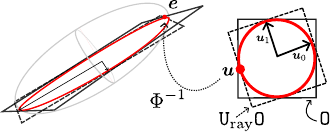}
    \caption{Examples of 3D quads obtained by mapping 2D squares via the isomorphism $\Phi^{-1}$.}
    \label{fig:quad}
    \vspace{-15pt}
\end{figure}
\myparagraph{Quad vertices and optimality}
Determining the vertices of a quad enclosing $\mathcal E$ becomes easy given the mapping $\Phi$, because it is sufficient to enclose the unit circle $\mathbb S_1$ with a 2D quad and map it back to 3D space with $\Phi^{-1}$ (see,~\cref{fig:quad}a). The explicit form of $\Phi^{-1}$ can be obtained by traversing backwards the transformations from the previous paragraph, with some terms rearranged:
\begin{equation}\label{eq:invPhi}
\Phi^{-1}(\vct u)\coloneqq \frac{c^2-\kappa}{c}\underbrace{\mat R_p\mat S\vRv}_{\eqqcolon\mat Q}\mat\fH(a\vct u)\,.    
\end{equation}
Unfortunately, there are infinitely-many ways we can approximate the unit circle with a quad, two examples being given in \cref{fig:quad}, but not all are efficient for the sake of rendering. In particular, their projection on the image plane can span different area sizes, potentially introducing a waste of compute on irrelevant pixels. Ideally, we would like to position the quad in a way to minimize the spanned area \emph{on the image plane}, but this requires additional complexity in the way the vertices are computed, to the detriment of the overall rendering speed. What we found out to be a good compromise is to position the vertices of the enclosing quad in a way to span the smallest area \emph{in 3D space}, by forming a tight rectangle aligned with the 2D ellipse's axes (see~\cref{fig:quad}b).
To this end, we identify one vertex of the ellipse $\mathcal E$, \ie one of the two endpoints along its major axis, by localizing it first on the unit circle. This is achieved by solving the following optimization problem
\begin{equation}\label{eq:optim}
\vct u_1\in\arg\max_{\vct u\in\mathbb S_1}\Vert\Phi^{-1}(\vct u)-\Phi^{-1}(\vct 0)\Vert^2\,,
\end{equation}
which finds the point $\vct u_1$ on the unit circle whose counterpart $\Phi^{-1}(\vct u)$ on the ellipse $\mathcal E$ maximizes the distance to the ellipse's center, which can computed as $\Phi^{-1}(\vct 0)$. 
By substituting~\cref{eq:invPhi} into~\cref{eq:optim}, and dropping constant scalar factors that do not change the maximizers, the objective takes a standard quadratic form (see,~\cref{ss:optim2}):
\begin{equation}\label{eq:optim2}
\vct u_1\in\arg\max_{\vct u\in\mathbb S_1}\vct u^\top\mat B\vct u\,.
\end{equation}
Here, $\mat B\coloneqq \mat Q_{0:2}^\top\mat Q_{0:2}$, where $\mat Q_{0:2}\in\mathbb R^{3\times 2}$ denotes $\mat Q$ restricted to the first two columns. The solution $\vct u_1$ is an eigenvector of $\mat B$ with maximum eigenvalue, which can be computed in closed-form for $2\times 2$ matrices. Similarly, $\vct u_0$, \ie the point corresponding to the minor axis, is an eigenvector of $\mat B$ with minimum eigenvalue. Since $B$ has only two eigenvectors and are mutually orthogonal, $\vct u_0$ can be computed by simply rotating $\vct u_1$ by 90 degrees anticlockwise. 
By doing so we ensure to preserve a consistent orientation of the quad surface.
We stack $\vct u_0$ and $\vct u_1$ to form a $2\times 2$ matrix $\mat U_\text{ray}\coloneqq (\vct u_0,\vct u_1)$, which we use to rotate the vertices of the canonical quad $\mat O$ before mapping it back to 3D space via $\Phi^{-1}$ (see,~\cref{fig:quad}). Although this mapping would already provide a valid 3D quad for rendering, we also scale it by the factor $\frac{c^2}{c^2-\kappa}$. This scaling operation preserves the support of camera rays, making the scaled quad equivalent to the original one from a rendering perspective, but it enables the computation of vertices in a more efficient and stable way.
In fact, the final set of quad vertices $\mat V_\text{ray}$ can be computed as follows  (see~\cref{ss:V_ray}), mimicking~\cref{eq:Vgs}:
\begin{equation}\label{eq:V_ray}
\mat V_\text{ray}\coloneqq \mat T_\text{ray}\mathcal H(\mat Z_\text{ray})\,,
\end{equation}
where $\mat Z_\text{ray}\coloneqq \frac{\sqrt \kappa}{b}\mat O$ and $
\mat T_\text{ray}\coloneqq \begin{bmatrix}\mat Q_{0:2}\mat U_\text{ray}&\vmu\end{bmatrix}$.
\myparagraph{A note on near plane clipping}
The proposed formulation works under the assumption that near plane clipping is disabled. Indeed, if a quad intersecting the near plane is clipped, we obtain undesired effects like visible discontinuities (see~\cref{fig:cipping}). Nonetheless, frustum culling of primitives based on a near plane is still applicable without consequences, for the whole quad is removed in that case.
\begin{figure}[htb]
    \centering
    \includegraphics[width=0.6\linewidth]{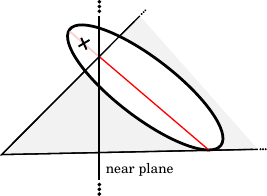}
    \caption{Undesired effects of near-plane clipping. Sharp discontinuities might be visible if a quad intersects the clipping plane.}
    \label{fig:cipping}
    \vspace{-10pt}
\end{figure}

\subsection{Fragment shader}
Our goal is to compute the rendered primitive's opacity in~\cref{eq:rend_opacity} by exploiting hardware interpolation of quantities specified at the quad's vertices. Again, the relevant part of the computation is $\fDray(\vx;\vmu,\vE)$ since it changes over pixels.
Similarly to GS, we have that for any ray $\vx$ intersecting the primitive's quad, there exist interpolating coefficients $\vct\alpha\in\mathbb R^4$ such that $\fDray(\vx;\vmu,\vE)=\fDray(\mat V_\text{ray}\vct\alpha;\vmu,\vE)$. Moreover, for any such $\vct\alpha$ the following holds (see,~\cref{ss:fragment}):
\begin{equation}
\fDray(\mat V_\text{ray}\vct\alpha;\vmu,\vE)=\inv{\left\{c^{-2}+\Vert\mat Z_\text{ray}\vct\alpha\Vert^{-2}\right\}}\,.
\end{equation}
Hence, it is sufficient to interpolate $\mat Z_\text{ray}$ and apply a simple scalar function to get $\fDgs(\vx;\vmu,\vE)$ over the quad area. The resulting quantity is then fed to a pixel-independent function to get the rendered opacity as per~\cref{eq:rend_opacity}.

\section{MIP for RayGS}\label{sec:mip}
In this section we focus on MIP-related issues, which should be addressed to ensure higher-quality renderings in particular for VR applications, where we are free of moving in the scene.
The problem arises from the fact that when we render we approximate a pixel with a single camera ray in the center, instead of considering the whole pixel area. Imagine to have a primitive that spans less than a pixel when rendered. If the center of the pixel overlaps with the support of the primitive, the full pixel will take the primitive's rendered color. Otherwise it will show the background color. 
We would instead expect the pixel color to be the combination of primitive and background color, depending on the size of the primitive's support.

In the context of GS, there have been works suggesting ways to address MIP issues~\cite{yu2024mip}, but to our knowledge no work has explicitly addressed the matter for RayGS. To fill this gap, we introduce a novel formulation that can be easily integrated into our fast renderer.
Given a (normalized) ray $\vx$, the idea is to marginalize the Gaussian 3D distribution on a plane that is orthogonal to $\vx$ and passing through the point of maximum density $\tau(\vx)\vx$. This yields a 2D Gaussian distribution on the same plane, which can be smoothed using a properly-sized isotropic 2D Gaussian distribution to approximate the integral over the pixel area. By applying this idea, we end up with the following per-primitive distribution over (normalized) camera rays (see,~\cref{ss:P} for a detailed derivation):
\begin{equation*}\label{eq:P}
P_\text{MIP}(\vx)\propto\frac{\tau^2(\vx)}{\sqrt{|\hat{\mat\Sigma}_x|\vx^\top\inv{\hat\vE_x}\vx}}\exp\left(-\frac{1}{2}\fDray(\vx;\vmu,\hat\vE_x)\right)\,,    
\end{equation*}
where $\hat\vE_x\coloneqq\vE+\sigma_x^2\tau(\vx)^2\mat I$ is a pixel-dependent 3D covariance matrix and $\sigma^2_x$ represents the area of the pixel at unit distance along the ray, which is in principle also dependent on the ray because the plane we are projecting onto is not necessarily parallel to the image plane. 
Akin to MIP-Splatting~\cite{yu2024mip}, we can compute the rendered opacity by considering the exponential term in $P_\text{mip}(\vct x)$ and modulating the prior opacity so that the total opacity matches the one of standard RayGS, where the modulating factor is given by $\sqrt{\frac{|\mat\Sigma|c^2}{|\hat{\mat\Sigma}_x|\vx^\top\inv{\hat\vE_x}\vx}}$.
The resulting formula is structurally similar to the one from~\cite{yu2024mip}, differences being that we do not require separate 2D and 3D filters and the modulation factor is pixel-dependent. Unfortunately, the latter fact, despite being theoretically more correct, poses challenges when it comes to having a fast rasterizer integrating it. For this reason, we introduce the following approximations in our implementation. We start considering $\sigma_x$ constant and regard $\hat\vE$ as the resulting pixel-invariant counterpart of $\hat\vE_x$. Next, we assume $\vx^\top\inv{\hat\vE}\vx\approx\hat c^2\coloneqq \vmu^\top\inv{\hat\vE}\vmu$. This approximation is accurate when primitives are sufficiently small to have rays concentrated around the mean, which accounts for most of the cases. Conversely, when primitives are sufficiently large along both $(x,y)$-directions then the modulation factor is close to $1$ for all $\vct x$, so the approximation still works. The cases when the approximation is less accurate are more rare, \ie when primitives are thin less than a pixel upon projection, but sufficiently elongated to span a wide cone of rays. 
The final form of our MIP formulation is thus
\begin{equation}\label{eq:mip}
\omega^\text{mip}(\vx)\approx o_i\sqrt{\frac{|\mat\Sigma|c^2}{|\hat{\mat\Sigma}|\hat c^2}}\exp\left(-\frac{1}{2}\fDray(\vx;\vmu,\hat\vE)\right)\,,    
\end{equation}
which can be efficiently integrated in our fast renderer.

\section{Experiments}
We have implemented our hardware-rasterized renderer for RayGS  on top of VKGS~\cite{vkgs}, which is based on Vulkan. Accordingly, we refer our method to as \emph{VKRayGS}. However, other OpenGL implementations are potentially possible based on what we described in the paper. 
Our experimental evaluation targets different goals:
    
\myparagraph{Rendering speed} We want to show that our contribution unlocks significantly higher rendering speed for RayGS models than the best publicly-available, open-sourced alternative, which at the time of writing is the CUDA-based renderer from Gaussian Opacity Field (GOF)~\cite{yu2024gaussian}. Moreover, to put the numbers in the right perspective, we provide in~\cref{ss:GSvsVKGS} a speed comparison between the original VKGS implementation of GS and the CUDA-based implementation of Gaussian Splatting (GS) from INRIA~\cite{gs-inria}, in its most recent version integrating speed optimizations.
    
\myparagraph{Rendering quality} The way primitives are rendered in our formulation is mathematically equivalent to~\cite{yu2024gaussian}, but there are still factors that can influence the final rendering quality, like differences occurring in the rest of the pipeline (\eg how primitives are clipped). For this reason, we report quality metrics in addition to speed measurements.  However, given that we use models trained with the differentiable renderer from the competitors, it is reasonable to expect a bias in their favor. 
Since quality drops that are not directly ascribable to our contribution are expected to show up also when comparing VKGS against its CUDA counterpart, we decided to include in~\cref{ss:GSvsVKGS} also the latter comparison despite not being directly related to our contribution.

\subsection{Evaluation Protocol}
We evaluate the performance of our renderer on scenes from two benchmark datasets that have been often used in the context of novel-view synthesis with GS (see \eg~\cite{kerbl3Dgaussians,rota2024revising}), namely MipNerf360~\cite{barron2022mip} and Tanks and Temples~\cite{knapitsch2017tanks}. Those are real-world captures that span both indoor and outdoor scenarios.
For each scene, every $8$th image is set aside to form a test set, on which three quality metrics are reported, namely peak signal-to-noise ratio (PSNR),
structural similarity (SSIM) and the perceptual metric from (LPIPS) using VGG. Following the standard protocol from~\cite{barron2022mip}, we evaluate MipNerf360 indoor/outdoor scenes using the $2\times$/$4\times$ downsampling factors, respectively. Moreover, we use downsampled images provided by the authors of the dataset. For Tanks\&Temples we evaluate the results at $2\times$ downsampling factor, akin to previous methods.
The same images used for quality evaluations are also used to compute the rendering speed in terms of Frames-Per-Second (FPS). We opted to run experiments on an RTX2080, which is a mid-range GPU, sufficiently powerful and CUDA-based to run the implementations from the competitors. Nevertheless, our implementation can run on any GPU supporting Vulkan and, more in general, enables implementations in OpenGL, which broadens up the applicability spectrum significantly. All scene models used for the experiments are either provided by the authors of the competing methods or, if not available, have been trained using the code they provide.
Finally, the quantitative evaluations of our model are run with MIP disabled, because the scene models from~\cite{yu2024gaussian} already incorporate an additive factor on the 3D covariance and are trained without the MIP opacity modulation. We nevertheless provide qualitative examples showing the importance of the MIP formulation.

\subsection{Results}
Before delving into the results obtained by our fast renderer against GOF, and following our previous discussion about quality expectations, we discuss a quantitative comparison of VKGS against GS. In addition, we provide some qualitative results and discussion about MIP. 
\begin{figure}[tb]
    \centering
    \includegraphics[width=.48\textwidth]{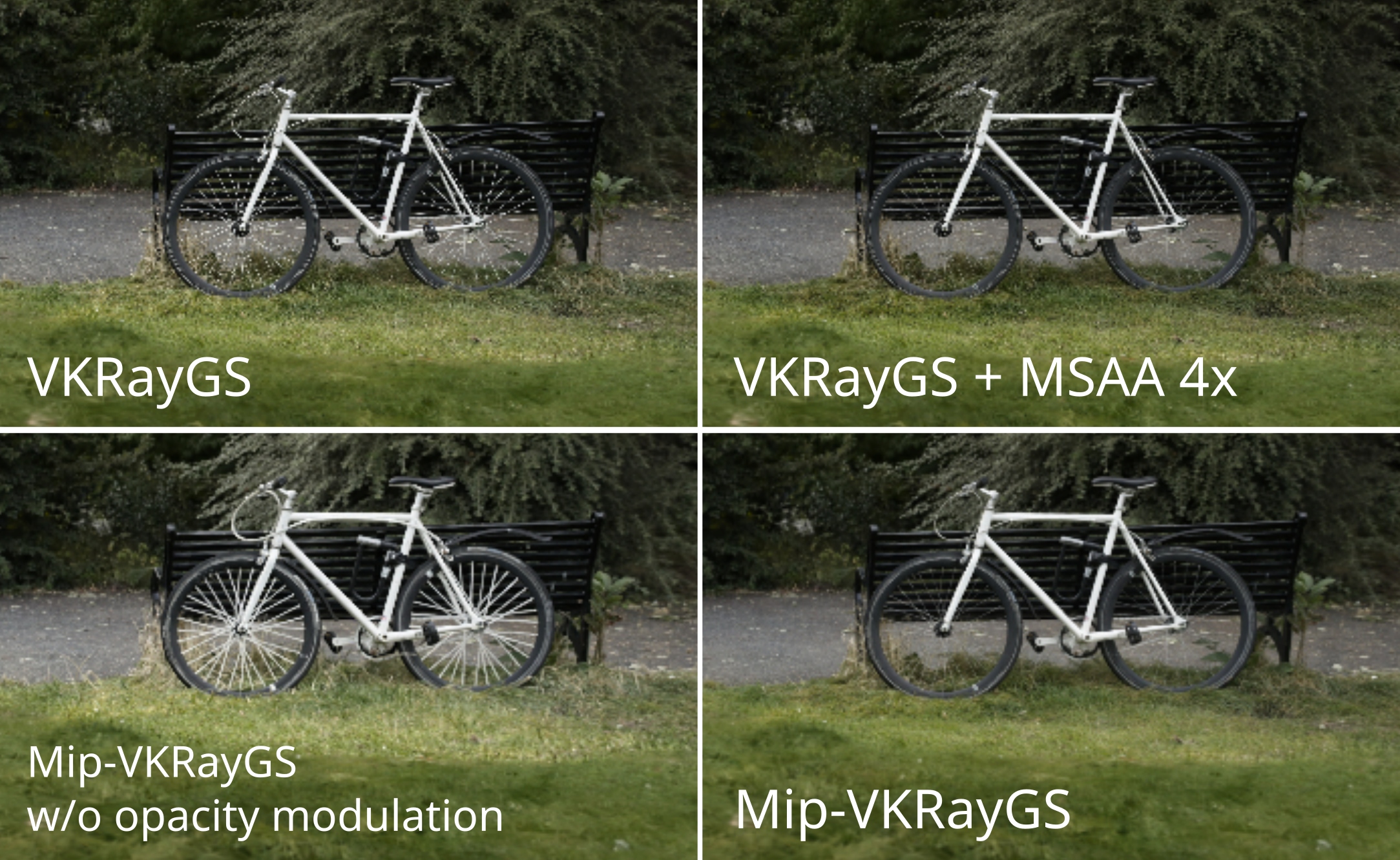}
    \caption{Benefits of our MIP formulation for RayGS. Best viewed with digital zoom. See text for details.}
    \label{fig:mip}
    \vspace{-10pt}
\end{figure}

\myparagraph{GS versus VKGS}
In~\cref{ss:GSvsVKGS}, we report results obtained by GS versus the Vulkan counterpart VKGS on the benchmark datasets, which highlight a clear speed advantage of the Vulkan implementation over the CUDA-based one from GS, being on average $2\times$ faster. 
Quality metrics, instead, are not significantly different, excepting a few cases. However, the fact that there are differences
indicates potential misalignment between the implementations and the results favor GS because the model has been trained with the same renderer.
\begin{figure*}[thb]
    \centering
    \includegraphics[width=.9\textwidth]{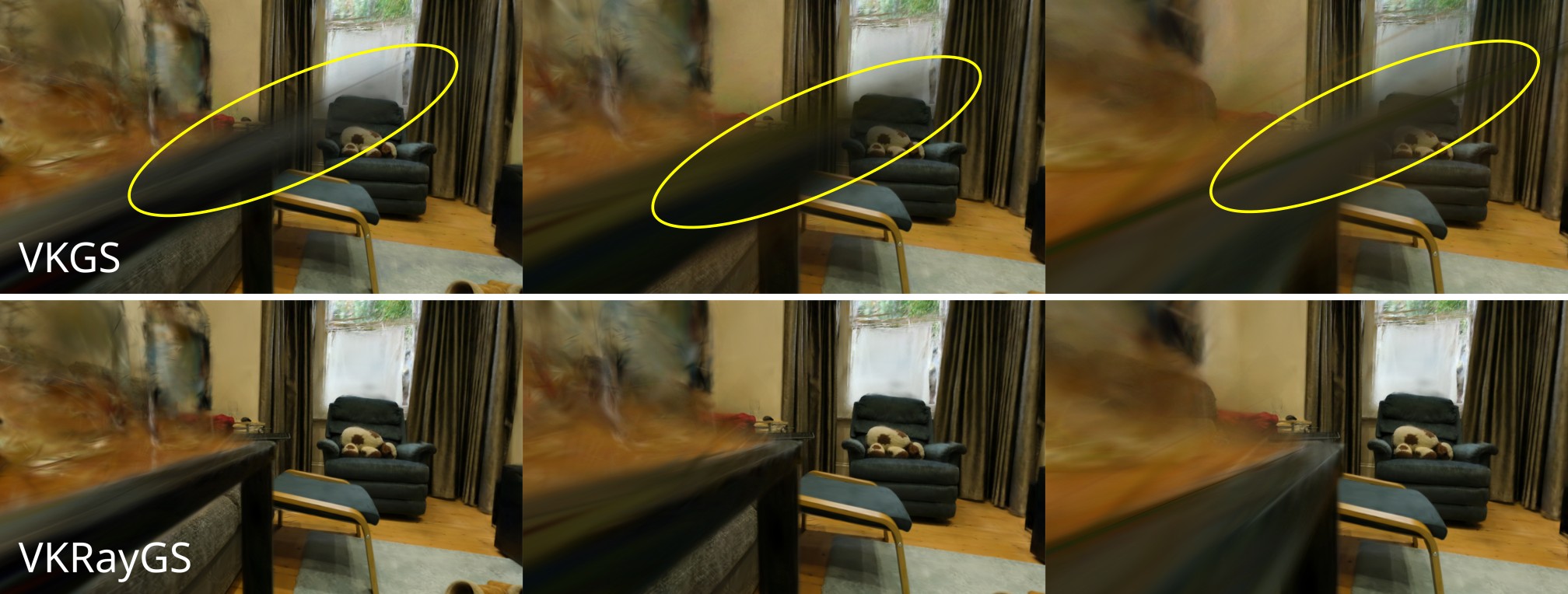}
    \caption{Artifacts one might experience with GS models (top) as opposed to RayGS (bottom), while moving close to objects. Renderings are from the \emph{room} scene of MipNeRF360 and obtained with VKGS and our VKRayGS, respectively. }
    \label{fig:artifacts}
    \vspace{-10pt}
\end{figure*}

\myparagraph{GOF versus VKRayGS}
In~\cref{tab:quantitative_raygs}, we report the results obtained by our renderer against GOF. We have split the table into two sections to distinguish scenes from MipNerf360 (top) and Tanks\&Temples (bottom). For each scene, we report the size of the model in terms of number of primitives $N$ and report left-to-right speed comparisons in terms of FPS and quality metrics in terms of PSNR, SSIM and LPIPS, averaged over all the test views.
we observe huge gains in terms of speed, with an average speed up of $40\times$, highlighting the relevance of our contribution, for it unlocks real-time rendering with the better RayGS models (as reflected by the overall worse perceptual metrics obtained with GS in~\cref{tab:quantitative_gs}). If we shift our focus on the qualitative metrics, we observe a slight quality drop with some cases where our renderer delivers even better perceptual scores. This drop is motivated by implementation misalignments between GOF and VKRayGS that are not ascribable to our method, as confirmed by similar, if not larger, gaps between VKGS and GS (see, ~\cref{tab:quantitative_gs}).
Besides the quantitative analysis, qualitative comparisons are provided in~\cref{sec:qualitative}, where perceptual differences are barely visible.
\begin{table}[thb]
    \centering
    \setlength{\tabcolsep}{2pt}
    \resizebox{\columnwidth}{!}{
    \begin{tabular}{c||c||cc||cc|cc|cc}\toprule
        {\footnotesize on RTX2080} &&\multicolumn{2}{c}{FPS $\uparrow$ }&\multicolumn{2}{c}{PSNR $\uparrow$}&\multicolumn{2}{c}{SSIM $\uparrow$}&\multicolumn{2}{c}{LPIPS $\downarrow$}\\
 Scene&$N$&GOF&VKRayGS&GOF&VKRayGS&GOF&VKRayGS&GOF&VKRayGS\\\midrule\midrule
 bicycle&5.35M&4&\bf 177&25.47&25.31&0.784&0.784&0.206&0.203\\
bonsai&1.07M&6&\bf 341&31.60&31.46&0.937&0.930&0.240&0.204\\
counter&0.82M&6&\bf 292&28.69&28.57&0.901&0.896&0.258&0.230\\
flowers&3.28M&6&\bf 191&21.67&21.61&0.632&0.631&0.309&0.310\\
garden&4.40M&4&\bf 172&27.46&27.34&0.866&0.865&0.122&0.117\\
kitchen&1.07M&5&\bf 242&30.74&30.63&0.915&0.911&0.168&0.153\\
room&1.09M&5&\bf 305&30.81&30.55&0.915&0.906&0.281&0.245\\
stump&4.97M&6&\bf 186&26.96&26.94&0.790&0.792&0.223&0.22\\
treehill&4.02M&5&\bf 184&22.40&22.48&0.638&0.636&0.325&0.326\\
\midrule
barn&0.83M&10&\bf 312&28.99&28.30&0.892&0.883&0.190&0.192\\
caterpillar&1.43M&7&\bf 247&23.68&23.12&0.820&0.810&0.241&0.246\\
ignatius&2.60M&7&\bf 203&22.76&22.42&0.826&0.815&0.186&0.188\\
meetingroom&0.95M&7&\bf 351&25.50&24.71&0.881&0.867&0.235&0.230\\
truck&2.14M&7&\bf 195&25.80&25.30&0.892&0.881&0.153&0.141\\
\bottomrule
    \end{tabular}}
    \caption{Comparison between GOF and VKRayGS on scenes from MipNeRF360 and Tanks\&Temples. }
    \label{tab:quantitative_raygs}
    \vspace{-10pt}
\end{table}


\myparagraph{Benefits of MIP}
To understand why it is important to address MIP-related issues, in particular in real-time viewers, we provide in~\cref{fig:mip} an example from the MipNeRF360 \emph{bicycle} scene. We show crops of the bike from images rendered by a far away camera. 
Top-left, we provide the rendering with plain VKRayGS, which exhibits strong aliasing artifacts. Top-right, we adopt a $4\times$ multi-sampling anti-aliasing (MSAA) approach, which consists in averaging $4$ rays per pixel, but also this solution solves  only partially the issue at a higher computational cost. Bottom-left, we show renderings from VKRayGS with our MIP formulation, when we set $\sigma^2\coloneqq 0.1$ but neglect the opacity modulation factor. This solution solves the aliasing problem, but renders primitives unnaturally thick (see \eg the bike wheel rays). Finally, bottom-right, we have VKRayGS with the full MIP formulation, which produces an antialiased output without artifacts at a negligible computational overhead.

\myparagraph{GS versus RayGS}
In~\cref{fig:artifacts} we highlight some artifacts that typically occur when using a GS renderer like VKGS as opposed to a RayGS one like ours. We show three frames of a camera that moves along a linear trajectory in the \emph{room} scene of MipNeRF360. The camera moves close to objects in the scene on purpose, as this is typically the setting under which artifacts occur. On the top row, we present the results with VKGS, which exhibit spikes inconsistent with the scene geometry. On the bottom row, we report results obtained with VKRayGS, which despite being RayGS-based is executed on a GS scene model. As we can see, the artifacts afflicting the GS renderer are not there. This is because in RayGS models the rendered opacity is geometrically more consistent as opposed to GS.



\section{Conclusions}
We have presented a novel approach to rendering ray-based 3D Gaussian Splatting (RayGS) using hardware rasterization, achieving both fast and high-quality results for novel view synthesis. Our method leverages the advantages of RayGS, which provides superior quality compared to traditional 3DGS, while obtaining significant rendering speed gains on all tested scenes. We have demonstrated that our approach can render high-quality images at frame rates suitable for VR and MR applications.
Our contributions include a mathematically rigorous and geometrically intuitive derivation of how to efficiently estimate all relevant quantities for rendering of RayGS models, as well as a solution to MIP-related issues in a RayGS formulation, enabling alias-free rendering of scenes at diverging test and training scales. 

Our work has shown how to significantly speedup RayGS models at test time, but it would be interesting to employ hardware rasterization to improve training time as well. It is not trivial how this can be achieved and we leave this to future work.

{
    \small
    \bibliographystyle{ieeenat_fullname}

\begin{thebibliography}{42}
\providecommand{\natexlab}[1]{#1}
\providecommand{\url}[1]{\texttt{#1}}
\expandafter\ifx\csname urlstyle\endcsname\relax
  \providecommand{\doi}[1]{doi: #1}\else
  \providecommand{\doi}{doi: \begingroup \urlstyle{rm}\Url}\fi

\bibitem[Barron et~al.(2022)Barron, Mildenhall, Verbin, Srinivasan, and
  Hedman]{barron2022mip}
Jonathan~T Barron, Ben Mildenhall, Dor Verbin, Pratul~P Srinivasan, and Peter
  Hedman.
\newblock Mip-nerf 360: Unbounded anti-aliased neural radiance fields.
\newblock In \emph{Proceedings of the IEEE/CVF Conference on Computer Vision
  and Pattern Recognition}, pages 5470--5479, 2022.

\bibitem[Bul{\`o} et~al.(2024)Bul{\`o}, Porzi, and
  Kontschieder]{rota2024revising}
Samuel~Rota Bul{\`o}, Lorenzo Porzi, and Peter Kontschieder.
\newblock Revising densification in gaussian splatting.
\newblock In \emph{European Conference on Computer Vision}, 2024.

\bibitem[Chen et~al.(2023)Chen, Wang, and Liu]{chen2023text}
Zilong Chen, Feng Wang, and Huaping Liu.
\newblock Text-to-3d using gaussian splatting.
\newblock \emph{arXiv preprint arXiv:2309.16585}, 2023.

\bibitem[Fan et~al.(2023)Fan, Wang, Wen, Zhu, Xu, and
  Wang]{fan2023lightgaussian}
Zhiwen Fan, Kevin Wang, Kairun Wen, Zehao Zhu, Dejia Xu, and Zhangyang Wang.
\newblock Lightgaussian: Unbounded 3d gaussian compression with 15x reduction
  and 200+ fps.
\newblock \emph{arXiv preprint arXiv:2311.17245}, 2023.

\bibitem[fast-gauss()]{fast-gauss}
fast-gauss.
\newblock Fast gaussian rasterization.
\newblock GitHub, 2024.

\bibitem[Feng et~al.(2024)Feng, Chen, Fu, Liao, Wang, Liu, Pei, Li, Zhang, and
  Dai]{feng2024flashgs}
Guofeng Feng, Siyan Chen, Rong Fu, Zimu Liao, Yi Wang, Tao Liu, Zhilin Pei,
  Hengjie Li, Xingcheng Zhang, and Bo Dai.
\newblock Flashgs: Efficient 3d gaussian splatting for large-scale and
  high-resolution rendering.
\newblock \emph{arXiv preprint arXiv:2408.07967}, 2024.

\bibitem[gs-inria()]{gs-inria}
gs-inria.
\newblock {INRIA} {G}aussian {S}platting.
\newblock \url{https://github.com/graphdeco-inria/gaussian-splatting}, 2024.

\bibitem[Gu{\'e}don and Lepetit(2023)]{guedon2023sugar}
Antoine Gu{\'e}don and Vincent Lepetit.
\newblock Sugar: Surface-aligned gaussian splatting for efficient 3d mesh
  reconstruction and high-quality mesh rendering.
\newblock \emph{arXiv preprint arXiv:2311.12775}, 2023.

\bibitem[Gumhold(2003)]{Gumhold2003SplattingIE}
Stefan Gumhold.
\newblock Splatting illuminated ellipsoids with depth correction.
\newblock In \emph{International Symposium on Vision, Modeling, and
  Visualization}, 2003.

\bibitem[Hahlbohm et~al.(2024)Hahlbohm, Friederichs, Weyrich, Franke, Kappel,
  Castillo, Stamminger, Eisemann, and Magnor]{hahlbohm2024efficient}
Florian Hahlbohm, Fabian Friederichs, Tim Weyrich, Linus Franke, Moritz Kappel,
  Susana Castillo, Marc Stamminger, Martin Eisemann, and Marcus Magnor.
\newblock Efficient perspective-correct 3d gaussian splatting using hybrid
  transparency.
\newblock \emph{arXiv preprint arXiv:2410.08129}, 2024.

\bibitem[Huang et~al.(2024)Huang, Yu, Chen, Geiger, and Gao]{huang20242d}
Binbin Huang, Zehao Yu, Anpei Chen, Andreas Geiger, and Shenghua Gao.
\newblock 2d gaussian splatting for geometrically accurate radiance fields.
\newblock In \emph{ACM SIGGRAPH 2024 Conference Papers}, pages 1--11, 2024.

\bibitem[Keetha et~al.(2023)Keetha, Karhade, Jatavallabhula, Yang, Scherer,
  Ramanan, and Luiten]{keetha2023splatam}
Nikhil Keetha, Jay Karhade, Krishna~Murthy Jatavallabhula, Gengshan Yang,
  Sebastian Scherer, Deva Ramanan, and Jonathon Luiten.
\newblock Splatam: Splat, track \& map 3d gaussians for dense rgb-d slam.
\newblock \emph{arXiv preprint arXiv:2312.02126}, 2023.

\bibitem[Kerbl et~al.(2023)Kerbl, Kopanas, Leimk{\"u}hler, and
  Drettakis]{kerbl3Dgaussians}
Bernhard Kerbl, Georgios Kopanas, Thomas Leimk{\"u}hler, and George Drettakis.
\newblock 3d gaussian splatting for real-time radiance field rendering.
\newblock \emph{ACM Transactions on Graphics}, 42\penalty0 (4), 2023.

\bibitem[Knapitsch et~al.(2017)Knapitsch, Park, Zhou, and
  Koltun]{knapitsch2017tanks}
Arno Knapitsch, Jaesik Park, Qian-Yi Zhou, and Vladlen Koltun.
\newblock Tanks and temples: Benchmarking large-scale scene reconstruction.
\newblock \emph{ACM Transactions on Graphics (ToG)}, 36\penalty0 (4):\penalty0
  1--13, 2017.

\bibitem[Kocabas et~al.(2023)Kocabas, Chang, Gabriel, Tuzel, and
  Ranjan]{kocabas2023hugs}
Muhammed Kocabas, Jen-Hao~Rick Chang, James Gabriel, Oncel Tuzel, and Anurag
  Ranjan.
\newblock Hugs: Human gaussian splats.
\newblock \emph{arXiv preprint arXiv:2311.17910}, 2023.

\bibitem[Lei et~al.(2023)Lei, Wang, Pavlakos, Liu, and Daniilidis]{lei2023gart}
Jiahui Lei, Yufu Wang, Georgios Pavlakos, Lingjie Liu, and Kostas Daniilidis.
\newblock Gart: Gaussian articulated template models.
\newblock \emph{arXiv preprint arXiv:2311.16099}, 2023.

\bibitem[Luiten et~al.(2023)Luiten, Kopanas, Leibe, and
  Ramanan]{luiten2023dynamic}
Jonathon Luiten, Georgios Kopanas, Bastian Leibe, and Deva Ramanan.
\newblock Dynamic 3d gaussians: Tracking by persistent dynamic view synthesis.
\newblock \emph{arXiv preprint arXiv:2308.09713}, 2023.

\bibitem[Mai et~al.(2024)Mai, Hedman, Kopanas, Verbin, Futschik, Xu, Kuester,
  Barron, and Zhang]{mai2024ever}
Alexander Mai, Peter Hedman, George Kopanas, Dor Verbin, David Futschik,
  Qiangeng Xu, Falko Kuester, Jon Barron, and Yinda Zhang.
\newblock Ever: Exact volumetric ellipsoid rendering for real-time view
  synthesis.
\newblock \emph{arXiv preprint arXiv:2410.01804}, 2024.

\bibitem[Matsuki et~al.(2023)Matsuki, Murai, Kelly, and
  Davison]{matsuki2023gaussian}
Hidenobu Matsuki, Riku Murai, Paul~HJ Kelly, and Andrew~J Davison.
\newblock Gaussian splatting slam.
\newblock \emph{arXiv preprint arXiv:2312.06741}, 2023.

\bibitem[Mildenhall et~al.(2020)Mildenhall, Srinivasan, Tancik, Barron,
  Ramamoorthi, and Ng]{mildenhall2020nerf}
Ben Mildenhall, Pratul~P. Srinivasan, Matthew Tancik, Jonathan~T. Barron, Ravi
  Ramamoorthi, and Ren Ng.
\newblock Nerf: Representing scenes as neural radiance fields for view
  synthesis.
\newblock In \emph{ECCV}, 2020.

\bibitem[Moenne-Loccoz et~al.(2024)Moenne-Loccoz, Mirzaei, Perel, de~Lutio,
  Esturo, State, Fidler, Sharp, and Gojcic]{3dgrt2024}
Nicolas Moenne-Loccoz, Ashkan Mirzaei, Or Perel, Riccardo de Lutio,
  Janick~Martinez Esturo, Gavriel State, Sanja Fidler, Nicholas Sharp, and Zan
  Gojcic.
\newblock {3D Gaussian Ray Tracing}: Fast tracing of particle scenes.
\newblock \emph{ACM Transactions on Graphics and SIGGRAPH Asia}, 2024.

\bibitem[Morgenstern et~al.(2023)Morgenstern, Barthel, Hilsmann, and
  Eisert]{morgenstern2023compact}
Wieland Morgenstern, Florian Barthel, Anna Hilsmann, and Peter Eisert.
\newblock Compact 3d scene representation via self-organizing gaussian grids.
\newblock \emph{arXiv preprint arXiv:2312.13299}, 2023.

\bibitem[M\"uller et~al.(2022)M\"uller, Evans, Schied, and
  Keller]{mueller2022instant}
Thomas M\"uller, Alex Evans, Christoph Schied, and Alexander Keller.
\newblock Instant neural graphics primitives with a multiresolution hash
  encoding.
\newblock \emph{ACM Trans. Graph.}, 41\penalty0 (4):\penalty0 102:1--102:15,
  2022.

\bibitem[Navaneet et~al.(2023)Navaneet, Meibodi, Koohpayegani, and
  Pirsiavash]{navaneet2023compact3d}
KL Navaneet, Kossar~Pourahmadi Meibodi, Soroush~Abbasi Koohpayegani, and Hamed
  Pirsiavash.
\newblock Compact3d: Smaller and faster gaussian splatting with vector
  quantization.
\newblock \emph{arXiv preprint arXiv:2311.18159}, 2023.

\bibitem[Niedermayr et~al.(2024)Niedermayr, Stumpfegger, and
  Westermann]{niedermayr2024compressed}
Simon Niedermayr, Josef Stumpfegger, and R{\"u}diger Westermann.
\newblock Compressed 3d gaussian splatting for accelerated novel view
  synthesis.
\newblock In \emph{Proceedings of the IEEE/CVF Conference on Computer Vision
  and Pattern Recognition}, pages 10349--10358, 2024.

\bibitem[Niemeyer et~al.(2024)Niemeyer, Manhardt, Rakotosaona, Oechsle,
  Duckworth, Gosula, Tateno, Bates, Kaeser, and Tombari]{niemeyer2024radsplat}
Michael Niemeyer, Fabian Manhardt, Marie-Julie Rakotosaona, Michael Oechsle,
  Daniel Duckworth, Rama Gosula, Keisuke Tateno, John Bates, Dominik Kaeser,
  and Federico Tombari.
\newblock Radsplat: Radiance field-informed gaussian splatting for robust
  real-time rendering with 900+ fps.
\newblock \emph{arXiv preprint arXiv:2403.13806}, 2024.

\bibitem[Radl et~al.(2024)Radl, Steiner, Parger, Weinrauch, Kerbl, and
  Steinberger]{radl2024stopthepop}
Lukas Radl, Michael Steiner, Mathias Parger, Alexander Weinrauch, Bernhard
  Kerbl, and Markus Steinberger.
\newblock Stopthepop: Sorted gaussian splatting for view-consistent real-time
  rendering.
\newblock \emph{ACM Transactions on Graphics (TOG)}, 43\penalty0 (4):\penalty0
  1--17, 2024.

\bibitem[Saito et~al.(2023)Saito, Schwartz, Simon, Li, and
  Nam]{saito2023relightable}
Shunsuke Saito, Gabriel Schwartz, Tomas Simon, Junxuan Li, and Giljoo Nam.
\newblock Relightable gaussian codec avatars.
\newblock \emph{arXiv preprint arXiv:2312.03704}, 2023.

\bibitem[Sigg et~al.(2006)Sigg, Weyrich, Botsch, and Gross]{sigg2006gpu}
Christian Sigg, Tim Weyrich, Mario Botsch, and Markus~H Gross.
\newblock Gpu-based ray-casting of quadratic surfaces.
\newblock In \emph{PBG@ SIGGRAPH}, pages 59--65, 2006.

\bibitem[Tang et~al.(2023)Tang, Ren, Zhou, Liu, and
  Zeng]{tang2023dreamgaussian}
Jiaxiang Tang, Jiawei Ren, Hang Zhou, Ziwei Liu, and Gang Zeng.
\newblock Dreamgaussian: Generative gaussian splatting for efficient 3d content
  creation.
\newblock \emph{arXiv preprint arXiv:2309.16653}, 2023.

\bibitem[vkgs()]{vkgs}
vkgs.
\newblock Vul{K}an {G}aussian {S}platting.
\newblock \url{https://github.com/jaesung-cs/vkgs}, 2024.

\bibitem[Weyrich et~al.(2007)Weyrich, Heinzle, Aila, Fasnacht, Oetiker, Botsch,
  Flaig, Mall, Rohrer, Felber, et~al.]{weyrich2007hardware}
Tim Weyrich, Simon Heinzle, Timo Aila, Daniel~B Fasnacht, Stephan Oetiker,
  Mario Botsch, Cyril Flaig, Simon Mall, Kaspar Rohrer, Norbert Felber, et~al.
\newblock A hardware architecture for surface splatting.
\newblock \emph{ACM Transactions on Graphics (TOG)}, 26\penalty0 (3):\penalty0
  90--es, 2007.

\bibitem[Wu et~al.(2023)Wu, Yi, Fang, Xie, Zhang, Wei, Liu, Tian, and
  Wang]{wu20234d}
Guanjun Wu, Taoran Yi, Jiemin Fang, Lingxi Xie, Xiaopeng Zhang, Wei Wei, Wenyu
  Liu, Qi Tian, and Xinggang Wang.
\newblock 4d gaussian splatting for real-time dynamic scene rendering.
\newblock \emph{arXiv preprint arXiv:2310.08528}, 2023.

\bibitem[Xie et~al.(2023)Xie, Zong, Qiu, Li, Feng, Yang, and
  Jiang]{xie2023physgaussian}
Tianyi Xie, Zeshun Zong, Yuxin Qiu, Xuan Li, Yutao Feng, Yin Yang, and
  Chenfanfu Jiang.
\newblock Physgaussian: Physics-integrated 3d gaussians for generative
  dynamics.
\newblock \emph{arXiv preprint arXiv:2311.12198}, 2023.

\bibitem[Yan et~al.(2023)Yan, Qu, Wang, Xu, Wang, Zhao, and Li]{yan2023gs}
Chi Yan, Delin Qu, Dong Wang, Dan Xu, Zhigang Wang, Bin Zhao, and Xuelong Li.
\newblock Gs-slam: Dense visual slam with 3d gaussian splatting.
\newblock \emph{arXiv preprint arXiv:2311.11700}, 2023.

\bibitem[Yang et~al.(2023)Yang, Yang, Pan, Zhu, and Zhang]{yang2023real}
Zeyu Yang, Hongye Yang, Zijie Pan, Xiatian Zhu, and Li Zhang.
\newblock Real-time photorealistic dynamic scene representation and rendering
  with 4d gaussian splatting.
\newblock \emph{arXiv preprint arXiv:2310.10642}, 2023.

\bibitem[Ye et~al.(2023)Ye, Danelljan, Yu, and Ke]{ye2023gaussian}
Mingqiao Ye, Martin Danelljan, Fisher Yu, and Lei Ke.
\newblock Gaussian grouping: Segment and edit anything in 3d scenes.
\newblock \emph{arXiv preprint arXiv:2312.00732}, 2023.

\bibitem[Yi et~al.(2023)Yi, Fang, Wu, Xie, Zhang, Liu, Tian, and
  Wang]{yi2023gaussiandreamer}
Taoran Yi, Jiemin Fang, Guanjun Wu, Lingxi Xie, Xiaopeng Zhang, Wenyu Liu, Qi
  Tian, and Xinggang Wang.
\newblock Gaussiandreamer: Fast generation from text to 3d gaussian splatting
  with point cloud priors.
\newblock \emph{arXiv preprint arXiv:2310.08529}, 2023.

\bibitem[Yu et~al.(2024{\natexlab{a}})Yu, Chen, Huang, Sattler, and
  Geiger]{yu2024mip}
Zehao Yu, Anpei Chen, Binbin Huang, Torsten Sattler, and Andreas Geiger.
\newblock Mip-splatting: Alias-free 3d gaussian splatting.
\newblock In \emph{Proceedings of the IEEE/CVF Conference on Computer Vision
  and Pattern Recognition}, pages 19447--19456, 2024{\natexlab{a}}.

\bibitem[Yu et~al.(2024{\natexlab{b}})Yu, Sattler, and Geiger]{yu2024gaussian}
Zehao Yu, Torsten Sattler, and Andreas Geiger.
\newblock Gaussian opacity fields: Efficient and compact surface reconstruction
  in unbounded scenes.
\newblock \emph{arXiv preprint arXiv:2404.10772}, 2024{\natexlab{b}}.

\bibitem[Yugay et~al.(2023)Yugay, Li, Gevers, and Oswald]{yugay2023gaussian}
Vladimir Yugay, Yue Li, Theo Gevers, and Martin~R Oswald.
\newblock Gaussian-slam: Photo-realistic dense slam with gaussian splatting.
\newblock \emph{arXiv preprint arXiv:2312.10070}, 2023.

\bibitem[Zielonka et~al.(2023)Zielonka, Bagautdinov, Saito, Zollh{\"o}fer,
  Thies, and Romero]{zielonka2023drivable}
Wojciech Zielonka, Timur Bagautdinov, Shunsuke Saito, Michael Zollh{\"o}fer,
  Justus Thies, and Javier Romero.
\newblock Drivable 3d gaussian avatars.
\newblock \emph{arXiv preprint arXiv:2311.08581}, 2023.

\end{thebibliography}

}
\clearpage
\appendix
\section{Derivations}
\begin{proposition}\label{prop:tau}
    $\tau(\vct x)\vct x$ is the point of maxiumum density along ray $\vct x\in\mathbb R^3\setminus\{\vct 0\}$ for a Gaussian distribution with parameters $(\vmu, \vE)$.
\end{proposition}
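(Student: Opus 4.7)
The plan is to reduce this to a one-dimensional optimization along the ray and exploit convexity. Parametrize the ray as $t\vct x$ for $t\in\mathbb R$, so the Gaussian log-density (up to constants) along the ray becomes
\[
-\tfrac{1}{2}f(t)\,,\qquad f(t)\coloneqq (t\vct x-\vmu)^\top\viE(t\vct x-\vmu)\,.
\]
Maximizing the density is equivalent to minimizing $f$. First I would expand the quadratic to obtain
\[
f(t)=t^2\,\vct x^\top\viE\vct x-2t\,\vct x^\top\viE\vmu+\vmu^\top\viE\vmu\,,
\]
then differentiate in $t$ and set the derivative to zero, yielding
\[
t^\star=\frac{\vct x^\top\viE\vmu}{\vct x^\top\viE\vct x}=\tau(\vct x)\,.
\]

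To argue that $t^\star$ is indeed a global minimizer (and not merely a critical point), I would invoke positive-definiteness of $\viE$: since $\vct x\neq\vct 0$, we have $\vct x^\top\viE\vct x>0$, so $f$ is a strictly convex quadratic in $t$ with a unique global minimum at the stationary point. This directly gives that $\tau(\vct x)\vct x$ is the unique point of maximum Gaussian density along the ray.

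None of the steps look to be an obstacle: the derivation is a textbook univariate-quadratic minimization, and the only subtlety worth a sentence is noting that $\vct x^\top\viE\vct x>0$ holds because $\viE\succ 0$ and $\vct x\neq\vct 0$, which both validates the denominator in $\tau(\vct x)$ and secures strict convexity of $f$. I would conclude with one line identifying $\tau(\vct x)\vct x$ as the maximum-density point along the ray, completing the proof.
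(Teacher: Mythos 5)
Your proposal is correct and follows essentially the same route as the paper's proof: parametrize the ray as $t\vct x$, minimize the quadratic Mahalanobis form $f(t)$ by setting its derivative to zero, and recover $t^\star=\tau(\vct x)$. The only difference is that you spell out the strict convexity via $\vct x^\top\viE\vct x>0$, which the paper leaves implicit.
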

\begin{proof}
Let $\alpha\vct x$ parametrize all points along ray $\vct x$. The point of maximum density for the given 3D Gaussian is obtained by solving
\[
\min_\alpha~\quad f(\alpha)\coloneqq (\alpha\vct x-\vmu)^\top \viE(\alpha\vct x-\vmu)\,.
\]
Since the problem is convex, we find the solution by simply setting to zero the derivative of $f$, yielding
\[
\alpha^\star\coloneqq \tau(\vct x)\,,
\]
where $\tau$ is defined as per \cref{ss:ray_gs}. It follows that $\alpha^\star\vct x = \tau(\vct x)\vct x$ is the point of maximum density along the ray as required.
\end{proof}
\begin{proposition}\label{prop:full_support}
If $c^2\leq\kappa$ holds for a given Gaussian primitive, then its support spans the whole image.
\end{proposition}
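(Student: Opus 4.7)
The plan is to show that the Mahalanobis distance $\fDray(\vx;\vmu,\vE)$ is bounded above by $c^2$ for every nonzero ray $\vx$, so that the hypothesis $c^2\leq\kappa$ forces every ray into the support as defined by~\cref{eq:support}.

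First I would expand $\fDray(\vx;\vmu,\vE)=(\tau(\vx)\vx-\vmu)^\top\viE(\tau(\vx)\vx-\vmu)$ into the three terms $\tau(\vx)^2\vx^\top\viE\vx-2\tau(\vx)\vx^\top\viE\vmu+\vmu^\top\viE\vmu$. Substituting the definition $\tau(\vx)=\tfrac{\vx^\top\viE\vmu}{\vx^\top\viE\vx}$ makes the first two terms collapse, leaving
\[
\fDray(\vx;\vmu,\vE)=c^2-\frac{(\vx^\top\viE\vmu)^2}{\vx^\top\viE\vx}\,,
\]
where I use $c^2=\vmu^\top\viE\vmu$ from~\cref{eq:c}. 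This is the core algebraic identity; it says geometrically that the squared divergence from the mean to the point of maximum density along a ray is always at most $c^2$, with equality exactly when $\vx^\top\viE\vmu=0$.

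Next I would invoke positive-definiteness of $\viE$ to conclude $\vx^\top\viE\vx>0$ for $\vx\in\mathbb R^3\setminus\{\vct 0\}$, so the subtracted term is nonnegative and hence $\fDray(\vx;\vmu,\vE)\leq c^2$ for every admissible ray. Combined with the hypothesis $c^2\leq\kappa$, this yields $\fDray(\vx;\vmu,\vE)\leq\kappa$ for every $\vx$, which by the support characterization in~\cref{eq:support} means the primitive's support contains all camera rays and therefore spans the entire image.

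I do not anticipate a real obstacle: the only subtlety is the bookkeeping in the collapse of the $\tau$-dependent terms, and the fact that $\vx\ne\vct 0$ is guaranteed because the domain of $\tau$ in~\cref{eq:Q_ray} already excludes the origin. The argument is algebraic and uses no more than what~\cref{prop:tau} already established (namely that $\tau(\vx)\vx$ is the ray point that minimizes the Mahalanobis distance to $\vmu$, so the pointwise bound $\fDray\leq c^2$ simply reflects that this minimum is no worse than the value attained at the origin).
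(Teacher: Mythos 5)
Your proposal is correct and follows essentially the same route as the paper: both reduce $\fDray(\vx;\vmu,\vE)$ to the identity $c^2-\frac{(\vx^\top\viE\vmu)^2}{\vx^\top\viE\vx}$ and observe that the subtracted term is nonnegative by positive-definiteness of $\viE$, so $\fDray\leq c^2\leq\kappa$ for every ray. The paper merely phrases the same inequality as $\frac{(\vmu^\top\viE\vx)^2}{\vx^\top\viE\vx}\geq 0\geq c^2-\kappa$ after plugging into the support condition.
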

\begin{proof}
By plugging~\cref{eq:Q_ray} into the support condition~\cref{eq:support}, unfolding the definition of $\tau(\vx)$, and after simple algebraic manipulations, we end up with the following relation:
\begin{equation}\label{eq:rel5}
\frac{(\vmu^\top\viE\vx)^2}{\vx^\top\viE\vx}\geq 0\geq c^2-\kappa\,,
\end{equation}
which holds for any $\vct x\in\mathbb R^3\setminus\{\vct 0\}$, or in other terms all possible camera rays.
\end{proof}
\begin{proposition}\label{prop:tau_nonzero}
If $c^2>\kappa$ holds for a given Gaussian primitive, then $\tau(\vx)\neq 0$ for all $\vx\in\mathbb R^3\setminus\{\vct 0\}$.
\end{proposition}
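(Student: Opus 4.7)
The plan is to argue by contradiction, leveraging exactly the algebraic identity that powers the proof of \cref{prop:full_support} but pivoting it to a different conclusion. The core observation is that $\tau(\vx) = 0$ for a nonzero $\vx$ can only happen when the numerator $\vx^\top\viE\vmu$ vanishes, since the denominator $\vx^\top\viE\vx$ is strictly positive whenever $\vx\neq\vct 0$ (as $\viE$ is positive definite). So the negation of the claim collapses to a single linear constraint on $\vx$, and that is what I would attack.

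First, I would expand $\fDray(\vx;\vmu,\vE) = (\tau(\vx)\vx-\vmu)^\top\viE(\tau(\vx)\vx-\vmu)$, substitute $\tau(\vx) = (\vx^\top\viE\vmu)/(\vx^\top\viE\vx)$, and simplify. The cross term and the quadratic-in-$\tau$ term cancel cleanly, giving the compact identity
\begin{equation*}
\fDray(\vx;\vmu,\vE) \;=\; c^2 \;-\; \frac{(\vx^\top\viE\vmu)^2}{\vx^\top\viE\vx},
\end{equation*}
which is essentially the rearrangement that appears inside the proof of \cref{prop:full_support}. Now suppose for contradiction that $\tau(\vx)=0$ for some $\vx\in\mathbb R^3\setminus\{\vct 0\}$. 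By the opening observation this forces $\vx^\top\viE\vmu=0$, so the subtracted term vanishes and the identity reduces to $\fDray(\vx;\vmu,\vE)=c^2$. The hypothesis $c^2>\kappa$ then gives $\fDray(\vx;\vmu,\vE)>\kappa$.

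The final step is turning this strict inequality into the stated conclusion, and here lies the main obstacle. The inequality $\fDray(\vx;\vmu,\vE)>\kappa$ places $\vx$ strictly outside the image-space support defined by \cref{eq:support}, but by itself does not contradict $\vx\neq\vct 0$; a ray that is $\viE$-orthogonal to $\vmu$ really does make $\tau$ vanish. The claim "$\tau(\vx)\neq 0$ for all $\vx\in\mathbb R^3\setminus\{\vct 0\}$" therefore must be read in the scope of rays the renderer actually evaluates, namely those that intersect the primitive's quad defined in \cref{eq:V_ray}, which by construction cover precisely the support. The writeup would close the loop by making this explicit: the hypothesis $c^2>\kappa$ pushes every $\viE$-orthogonal-to-$\vmu$ ray strictly outside the support, so $\tau$ never vanishes on any ray the fragment shader rasterizes, which is the form in which \cref{prop:tau_nonzero} is invoked downstream (e.g.\ when building the isomorphism $\Phi$ and when reasoning about points of maximum density along visible rays).
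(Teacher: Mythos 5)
Your argument is correct and is, at its core, the contrapositive of the paper's: both hinge on the identity $\fDray(\vx;\vmu,\vE)=c^2-(\vx^\top\viE\vmu)^2/(\vx^\top\viE\vx)$. The paper derives, for rays satisfying the support condition in \cref{eq:support}, the inequality $\tau(\vx)\,(\vmu^\top\viE\vx)\geq c^2-\kappa>0$ and reads off $\tau(\vx)\neq 0$; you instead assume $\tau(\vx)=0$, deduce $\fDray(\vx;\vmu,\vE)=c^2>\kappa$, and conclude the ray lies strictly outside the support. You are also right to flag that the statement cannot hold for literally every $\vx\in\mathbb R^3\setminus\{\vct 0\}$: any ray with $\vx^\top\viE\vmu=0$ has $\tau(\vx)=0$, and the paper's own displayed inequality is only valid under the support condition, so the proposition is implicitly scoped to rays in the support. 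That restricted reading is exactly what is needed downstream, since \cref{prop:tau_e} invokes it for rays with $\fDray(\vx;\vmu,\vE)=\kappa$, i.e.\ on the boundary of the support. The only slip in your closing discussion is the claim that rays hitting the quad of \cref{eq:V_ray} cover ``precisely'' the support --- they cover a superset of it, since the quad encloses $\mathcal E$ --- but this is immaterial: one checks that the quad lies in the plane $\{\vct z:\vct z^\top\viE\vmu=c^2\}$, which no ray with $\vx^\top\viE\vmu=0$ can reach, so $\tau$ is nonzero on all rasterized rays as well.
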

\begin{proof}
Following a derivation similar to~\cref{prop:full_support}, we arrive at the following relation
\[
\tau(\vx)(\vmu^\top\viE\vx)\geq c^2-\kappa>0\,.
\]
Since the condition $c^2>\kappa$ implies that $\vmu\neq\vct 0$, and also $\vx\neq\vct 0$, it follows that $\vmu^\top\viE\vx\neq 0$.
Hence, necessarily $\tau(\vx)\neq 0$ for all $\vx\in\mathbb R^3\setminus\{\vct 0\}$.
\end{proof}

\begin{proposition}\label{prop:tau_e}
    $\tau(\vct e)=1$ for all $\vct e\in\mathcal E$ assuming $c^2>\kappa$.
\end{proposition}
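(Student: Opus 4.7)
The plan is to exploit a simple homogeneity property of $\tau$ together with \cref{prop:tau_nonzero} which already guarantees non-vanishing of $\tau$ under the assumption $c^2>\kappa$.

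First I would verify the scaling identity: for any nonzero scalar $\alpha$ and any $\vct x\in\mathbb R^3\setminus\{\vct 0\}$, the definition $\tau(\vct x)=\frac{\vct x^\top\viE\vmu}{\vct x^\top\viE\vct x}$ is $(-1)$-homogeneous in $\vct x$, so a direct substitution gives
\[
\tau(\alpha\vct x)=\frac{\alpha\,\vct x^\top\viE\vmu}{\alpha^2\,\vct x^\top\viE\vct x}=\frac{1}{\alpha}\tau(\vct x)\,.
\]
Next I would take an arbitrary $\vct e\in\mathcal E$, which by definition is of the form $\vct e=\tau(\vct x)\vct x$ for some $\vct x$ on the boundary of the support. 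Applying the homogeneity identity with $\alpha=\tau(\vct x)$ yields
\[
\tau(\vct e)=\tau(\tau(\vct x)\vct x)=\frac{1}{\tau(\vct x)}\tau(\vct x)=1\,,
\]
which is the claim.

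The only subtlety, and the single place where the hypothesis $c^2>\kappa$ is needed, is that the above substitution requires $\tau(\vct x)\neq 0$ so that $\alpha=\tau(\vct x)$ is a valid nonzero scalar to plug into the homogeneity relation. This is exactly the content of \cref{prop:tau_nonzero}, which I would cite to close the argument. No further computation or case analysis is required, so there is no real obstacle: the proof is essentially two lines once the homogeneity observation is made explicit.
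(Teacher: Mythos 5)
Your proof is correct and rests on the same two ingredients as the paper's: the $(-1)$-homogeneity of $\tau$ and \cref{prop:tau_nonzero} to guarantee $\tau(\vct x)\neq 0$. The paper packages this slightly differently (writing $\vct x=\tau(\vct x)^{-1}\vct e$ to obtain $\vct e=\tau(\vct e)\vct e$ and then cancelling $\vct e\neq\vct 0$), but your direct computation of $\tau(\tau(\vct x)\vct x)$ is the same argument, if anything a touch more streamlined.
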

\begin{proof}
Given $\vct e\in\mathcal E$ we have by definition of $\mathcal E$ that $\vct e = \tau(\vct x)\vct x$ holds for some $\vct x\in\mathbb R^3\setminus\{\vct 0\}$. By~\cref{prop:tau_nonzero}, $\tau(\vx)\neq 0$ assuming $c^2>\kappa$, which implies that $\vct e\neq\vct 0$.
Then, we can write $\vct x = \alpha\vct e$ by setting $\alpha = \tau(\vct x)^{-1}$ and, therefore, $\vct e = \tau(\vct x) \vct x  = \tau(\alpha\vct e) \alpha\vct e = \tau(\vct e)\vct e$,  the last equality following by unfolding the definition of $\tau$. Accordingly, $\tau(\vct e)=1$ necessarily holds.
\end{proof}

\begin{proposition}\label{prop:cond}
    For any $\vct e\in\mathcal E$, the following holds:
    \[
    \vct e^\top\viE \vct e=\vct e^\top\viE\vct\mu=c^2-\kappa\
    \]
    provided that $c^2>\kappa$.
\end{proposition}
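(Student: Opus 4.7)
The plan is to combine the characterization of $\mathcal E$ coming from Proposition~\ref{prop:tau_e} with the defining equation $\fDray(\vx;\vmu,\vE)=\kappa$ of the boundary, and then do a short expansion.

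First, fix $\vct e\in\mathcal E$, so by definition $\vct e=\tau(\vx)\vx$ for some $\vx\in\mathbb R^3\setminus\{\vct 0\}$ with $\fDray(\vx;\vmu,\vE)=\kappa$. By Proposition~\ref{prop:tau_e}, which we are allowed to invoke since $c^2>\kappa$, we have $\tau(\vct e)=1$. Unfolding the definition $\tau(\vct e)=\frac{\vct e^\top\viE\vmu}{\vct e^\top\viE\vct e}$, this immediately gives the first equality $\vct e^\top\viE\vct e=\vct e^\top\viE\vmu$.

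Next, I would use the membership condition $\fDray(\vx;\vmu,\vE)=\kappa$. Because $\vct e=\tau(\vx)\vx$, the definition of $\fDray$ in \cref{eq:Q_ray} lets us rewrite this boundary condition as $(\vct e-\vmu)^\top\viE(\vct e-\vmu)=\kappa$. Expanding the quadratic form and recognizing $\vmu^\top\viE\vmu=c^2$ from \cref{eq:c} yields $\vct e^\top\viE\vct e-2\vct e^\top\viE\vmu+c^2=\kappa$. Substituting the first equality into this expression collapses the $\vct e$-terms into a single $-\vct e^\top\viE\vmu$, and rearranging gives $\vct e^\top\viE\vmu=c^2-\kappa$, which together with the first equality completes the chain $\vct e^\top\viE\vct e=\vct e^\top\viE\vmu=c^2-\kappa$.

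There is no real obstacle here: both steps are short algebraic consequences of results already in place. The only subtlety is making sure Proposition~\ref{prop:tau_e} is applicable, which is exactly why the hypothesis $c^2>\kappa$ is retained in the statement; it guarantees $\tau(\vx)\neq 0$ (Proposition~\ref{prop:tau_nonzero}) and hence $\vct e\neq\vct 0$, so that dividing by $\vct e^\top\viE\vct e$ inside $\tau(\vct e)$ is legitimate.
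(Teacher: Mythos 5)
Your proposal is correct and follows essentially the same route as the paper's proof: invoke Proposition~\ref{prop:tau_e} to get $\tau(\vct e)=1$ and hence $\vct e^\top\viE\vct e=\vct e^\top\viE\vmu$, then expand the boundary constraint $(\vct e-\vmu)^\top\viE(\vct e-\vmu)=\kappa$ and substitute. The only difference is that you spell out the "simple algebraic manipulations" the paper leaves implicit and explicitly flag why $c^2>\kappa$ is needed, which is a welcome addition rather than a deviation.
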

\begin{proof}
By \cref{prop:tau_e}, we have that $\tau(\vct e)=1$ holds for any $\vct e\in\mathcal E$. By definition of $\tau$ it follows that $\vct e^\top\viE \vct e=\vct e^\top\viE\vct\mu$. This together with the constraint $\fDray(\vct e;\vmu,\vE)=(\vct e-\vct\mu)^\top\viE(\vct e-\vct\mu)=\kappa$ from $\mathcal E$ yields the required relation by simple algebraic manipulations.
\end{proof}

\subsection{Derivation of $\vRv$}\label{ss:rot}
We define the rotation matrix aligning $\vct v$ to $\vmuh$ as follows
\[
\vRv \coloneqq 
\begin{cases}
\mat R_{\vmuh,\vv}&\text{if $\vmuh^\top\vct v\geq 0$}\\
\mat R_{\vmuh,-\vv}\mat P&\text{else}
\end{cases}
\]
where $\mat P\coloneqq\begin{bmatrix}-1&&\\&1&\\&&-1\end{bmatrix}$.
Here, we use the formula 
$
\mat R_{\vct x,\vct y}\coloneqq 2\frac{(\vct x+\vct y)(\vct x+\vct y)^\top}{(\vct x+\vct y)^\top(\vct x+\vct y)}-\mat I\,,
$ 
which yields a rotation matrix aligning 3D vector $\vct y$ to $\vct x$.
This is a special case of the Rodriguez rotation formula that we obtain considering a $180^\circ$ rotation around the axis $\vct x+\vct y$. Since our goal is to align $\vct v$ to $\vmuh$, $\mat R_{\vmuh,\vv }$ could already serve that purpose. Unfortunately, $\mat R_{\vmuh,\vv }$ is not defined when $\vmuh=-\vv$. For this reason, we distinguish two cases: If $\hat\vmu^\top\vct v\geq 0$, then Rodriguez rotation formula yields a valid solution, so we return $\mat R_{\vmuh,\vv }$. Otherwise, we first rotate the space $180^\circ$ around $\begin{bmatrix}
    0&1&0
\end{bmatrix}$ with matrix $\mat P$, so that $\vct v$ points in the opposite direction, and then apply $\mat R_{\vmuh,-\vv }$ to align $-\vv$ to $\vmuh$. Indeed, $\mat R_{\vmuh,-\vv}\mat P\vv=\mat R_{\vmuh,-\vv}(-\vv)=\vmuh$.

\subsection{Derivation of \cref{eq:optim2}}\label{ss:optim2}
We start simplifying the objective in~\cref{eq:optim}:
\begin{align*}
\Vert\Phi^{-1}(\vct u)&-\Phi^{-1}(\vct 0)\Vert^2\\
&\overset{(a)}{\propto}\Vert\mat Q[\fH(a\vct u)-\fH(a\vct 0)]\Vert^2\\
&\overset{(b)}{\propto}\Vert\mat Q_{0:2}\vct u\Vert^2\\
&\overset{(c)}{=}\vct u^\top\mat Q_{:,0:2}^\top\mat Q_{:,0:2}\vct u=\vct u^\top\mat B\vct u\,.
\end{align*}
Here, in (a) we unfold the definition of $\Phi^{-1}$ as per main paper, neglect multiplicative factors not depending on $\vct u$ and refactor terms; (b) follows from the fact that 
$\fH(\vct z)-\fH(\vct z')$ yields a 3D vector with a null $z$-coordinate for any $\{\vct z,\vct z'\} \subset \mathbb R^2$, so $\mat Q(\fH(\vct z)-\fH(\vct z'))=\mat Q_{:,0:2}(\vct z-\vct z')$. 
In addition, we neglected again multiplicative factors not depending on $\vct u$. Finally, (c) follows from unfolding the norm and noting that the resulting matrix of quadratic coefficients  coincides with $\mat B$ as per main paper.

\subsection{Derivation of \cref{eq:V_ray}}\label{ss:V_ray}
According to the description preceding \cref{eq:V_ray}, we have that $\mat V_\text{ray}\coloneqq \frac{c^2}{c^2-\kappa}\Phi^{-1}(\mat U_\text{ray}\mat O)$, where we assume that $\Phi^{-1}$ is applied column-wise to the input matrix. This can then be rewritten as follow:
\begin{align*}
    \mat V_\text{ray}&\overset{(a)}{=}c\mat Q\mathcal H(a \mat U_\text{ray}\mat O)\\
    &\overset{(b)}{=}c\mat Q\mathcal H\left(\frac{1}{c} \mat U_\text{ray}\mat Z_\text{ray}\right)\\
    &\overset{(c)}{=} c\left[\frac{1}{c} \mat Q_{0:2} \mat U_\text{ray}\mat Z_\text{ray} + \mat R_p\mat S\vRv\vct v\vct 1^\top\right ]\\
    &\overset{(d)}{=} c\left[\frac{1}{c} \mat Q_{0:2} \mat U_\text{ray}\mat Z_\text{ray} + \mat R_p\mat S\hat\vmu\vct 1^\top\right ]\\
    &\overset{(e)}{=} \mat Q_{0:2} \mat U_\text{ray}\mat Z_\text{ray} + \vmu\vct 1^\top\\
    &\overset{(f)}{=} \mat T_\text{ray}\mathcal H\left(\mat Z_\text{ray}\right)\,.
\end{align*}

Here, 
(a) follows by unfolding the definition of $\Phi^{-1}$ and simplifying the scalar factors; 
(b) is obtained by using the relation $a\mat O=\frac{1}{c}\mat Z_\text{ray}$ with $\mat Z_\text{ray}$ defined as per main paper;
(c) follows from the fact that we can write $\mat A\mathcal H(\mat X)=\mat A\begin{bmatrix}\mat X^\top& \vct 0\end{bmatrix}^\top + \mat A\vct v\vct 1^\top = \mat A_{0:2}\mat X + \mat A\vct v\vct 1^\top$, where $\vct 1$ is a vector of ones, and $\mat Q\coloneqq \mat R_p\mat S\vRv$ as per definition in~\cref{eq:invPhi}; 
(d) applies the relation $\vRv\vct v=\hat\vmu$; 
(e) results from unfolding the definition of $\hat\vmu$ as per main paper and simplifying matrix/scalar multiplications; 
(f) follows from the relation $\mat A\mat X +\vct y\vct 1^\top = \begin{bmatrix}
    \mat A&\vct y
\end{bmatrix}\mathcal H(\mat X)$ and from the definition of $\mat T_\text{ray}$ as per main paper.

\subsection{Derivation of RayGS's fragment shader}\label{ss:fragment}
We show how we derived the formula used in the fragment shader starting from~\cref{eq:Q_ray}, where $\vx\coloneqq \mat V_\text{ray}\vct\alpha$ for a given interpolating coefficient vector $\vct\alpha$ (\ie nonnegative and summing up to $1$):
\begin{align*}
\fDray(\vx;\vmu,\vE)&=(\tau(\vx)\vx-\vmu)^\top\viE(\tau(\vx)\vx-\vmu)\\
&\overset{(a)}{=}c^2+\tau(\vx)^2\vx^\top\viE\vx-2\tau(\vx)\vx^\top\viE\vmu\\
&\overset{(b)}{=}c^2-\frac{(\vx^\top\viE\vmu)^2}{\vx^\top\viE\vx}\\
&\overset{(c)}{=}c^2-\frac{(\vx^\top\mat R_p\mat S^{-2}\mat R_p^\top\vmu)^2}{\Vert\inv{\mat S}\mat R_p^\top\vx\Vert^2}\\
&\overset{(d)}{=}c^2\left(1-\frac{(\vx^\top\mat R_p\mat S^{-1}\hat\vmu)^2}{\Vert\inv{\mat S}\mat R_p^\top\vx\Vert^2}\right)\\
&\overset{(e)}{=}c^2\left[1-\frac{\left(\vct\alpha^\top\mat V_\text{ray}^\top\mat R_p\inv {\mat S}\hat{\vmu}\right)^2}{\Vert\inv{\mat S}\mat R_p^\top\mat V_\text{ray}\vct\alpha \Vert^2}\right ]\\
\end{align*}
\begin{align*}
\phantom{\fDray(\vx;\vmu,\vE)}&\overset{(f)}{=}c^2\left[1-\frac{\left(\fH\left(\frac{1}{c}\mat U_\text{ray}\mat Z_\text{ray}\vct\alpha\right)^\top\mat Q^\top\mat R_p\inv {\mat S}\hat{\vmu}\right)^2}{\Vert\inv{\mat S}\mat R_p^\top\mat Q\fH\left(\frac{1}{c}\mat U_\text{ray}\mat Z_\text{ray}\vct\alpha\right)\Vert^2}\right ]\\
&\overset{(g)}{=}c^2\left[1-\frac{\left(\fH\left(\frac{1}{c}\mat U_\text{ray}\mat Z_\text{ray}\vct\alpha\right)^\top\mat R_{\hat\vmu\to\vv}\hat{\vmu}\right)^2}{\Vert\mat R_{\hat\vmu\to\vv}\fH\left(\frac{1}{c}\mat U_\text{ray}\mat Z_\text{ray}\vct\alpha\right)\Vert^2}\right ]\\
&\overset{(h)}{=}c^2\left[1-\frac{\left(\fH\left(\frac{1}{c}\mat U_\text{ray}\mat Z_\text{ray}\vct\alpha\right)^\top\vct v\right)^2}{\Vert\fH\left(\frac{1}{c}\mat U_\text{ray}\mat Z_\text{ray}\vct\alpha\right)\Vert^2}\right ]\\
&\overset{(i)}{=}c^2\left[1-\frac{1}{1+c^{-2}\Vert\mat Z_\text{ray}\vct\alpha\Vert^2}\right ]\\
&\overset{(j)}{=}\inv{\left[c^{-2}+\Vert\mat Z_\text{ray}\vct\alpha\Vert^{-2}\right]}\,.
\end{align*}
Here, 
(a) follows by simple algebraic manipulations and by considering $c\coloneqq\sqrt{\vmu^\top\viE\vmu}$ as per main paper; 
(b) follows by unfolding the definition of $\tau$ as per main paper and by simple algebraic manipulations; 
(c) follows by unfolding $\viE\coloneqq \mat R_p\mat S^{-2}\mat R_p$ and rewriting the denominator into a squared norm; 
(d) follows by substituting $\mat S^{-1}\mat R_p^\top\vmu=c\hat\vmu$, where $\hat\vmu$ is as per main paper, and factorizing; 
(e) follows by unfolding the definition of $\vx$ provided above; 
(f) follows by unfolding $\mat V_\text{ray}$ using the relation (b) in~\cref{ss:V_ray}, using the identity $\mathcal H(\mat Z_\text{ray})\vct\alpha =\mathcal H(\mat Z_\text{ray}\vct\alpha)$, and simplifying scalar factors; 
(g) follows by unfolding the definition of $\mat Q$ as per main paper and simplifying matrix multiplications; 
(h) follows by noting that the norm of a rotated vector yields the norm of the vector and that $\mat R_{\hat\vmu\to\vv}\hat{\vmu}=\vct v$; 
(i) follows by from the identities $\fH(\vct z)^\top\vct v=1$ and $\Vert \fH(x\vct z)\Vert^2=1+x^2\Vert \vct z\Vert^2$, and that the norm is invariant to rotations of the argument. 
Finally, (j) follows by rearranging and simplifying terms.

\subsection{Derivation of $P_\text{MIP}(\vct x)$ in~\cref{sec:mip}}\label{ss:P}
We start by approximating the 3D-2D projection operator $\pi(\vct x)$ to the first-order around $\vct x_0$, yielding
\[
\hat\pi(\vct x)\coloneqq \pi(\vct x_0)+J_\pi(\vct x_0)(\vct x-\vct x_0)\,.
\]
Assuming $\vct x\sim\mathcal N(\vct \mu,\mat \Sigma)$, we have that $\vct u\coloneqq\hat\pi(\vct x)\sim\mathcal N(\hat\pi(\vct \mu),\mat J\mat\Sigma\mat J^\top)$, where $\mat J\coloneqq J_\pi(\vct x_0)$.
For each (unit) camera ray $\vct x$, we take $\vct x_0\coloneqq \tau(\vct x)\vct x$, where $\tau$ is as per main paper, and define a 2D Gaussian distribution $\mathcal N(\pi(\vct x_0),\sigma_{x}^2)$. The density of rays $\vct x$ is then determined by the expectation of $\mathcal N(\vct u;\pi(\vct \mu),\mat J\mat\Sigma\mat J^\top)$ with $\vct u\sim\mathcal N(\pi(\vct x_0),\sigma_{\vct x}^2\mat I)$, which yields
\[
\begin{aligned}
P_\text{MIP}(\vct x)&\coloneqq \int_{\mathbb R^2} \mathcal N(\vct u;\pi(\vct \mu),\mat J\mat\Sigma\mat J^\top)\mathcal N(\vct u;\pi(\vct x_0),\sigma_{\vct x}^2\mat I)d\vct u\\
&=\mathcal N(\pi(\vct x_0);\pi(\vct \mu),\mat {J\Sigma J^\top}+\sigma_{\vct d}^2\mat I)\,.
\end{aligned}
\]

We consider a projection operator that varies with the viewing ray $\vct x$. Specifically, we project 3D points to the spherical tangent space $\mathcal T_x\subset \mathbb R^3$ of $\vct x$ by leveraging the logarithmic map. The 3D tangent vector can be mapped to 2D points on the tangent plane by leveraging $\emph{any}$ local coordinate frame expressed as unitary, orthogonal columns of $\mat F_x\in\mathbb R^{3\times 2}$, which satisfies $\mat F_{\vct x}^\top\mat F_{\vct x}=\mat I$ and $\mat F_{\vct x}^\top \vct x=\vct 0$. This yields the following projection operator:
\[
\pi_{\vct x}(\vct z)\coloneqq \mat F_{\vct x}^\top\text{Log}_{\vct x}\left (\frac{\vct z}{\|\vct z\|}\right)\,.
\]
Then, $\pi_{\vct x}(\vct x_0)=\vct 0$ and the Jacobian $\mat J_{\vct x}\coloneqq J_{\pi_{\vct x}}(\vct x_0)$ of this projection operator evaluated at $\vct x_0$ is given by $\mat J_{\vct x}=\frac{\mat F_{\vct x}^\top}{\tau(\vct x)}$. It follows that
\[
\hat \pi_{\vct x}(\vct z)=\frac{1}{\tau(\vct x)}\mat F_{\vct x}^\top \vct z\,,
\]
which yields the following projected and smoothed 2D Gaussian distribution
\[
\begin{aligned}
P_\text{MIP}(\vct x)&=\mathcal N\left (\vct 0;\mat F_{\vct x}^\top \vct \mu;\mat F_{\vct x}^\top\mat\Sigma\mat F_{\vct x}+(\tau(\vct x)\sigma_{\vct x})^2\mat I\right )\tau(\vct x)^2\\
&=\mathcal N\left (\vct 0;\mat F_{\vct x}^\top \vct \mu;\mat F_{\vct x}^\top\left[\mat\Sigma+(\tau(\vct x)\sigma_{\vct x})^2\mat I\right]\mat F_{\vct x}\right )\tau(\vct x)^2\\
&\coloneqq\mathcal N\left (\vct 0;\mat F_{\vct x}^\top \vct \mu;\mat F_{\vct x}^\top\hat{\mat\Sigma}_x\mat F_{\vct x}\right )\tau(\vct x)^2\,,
\end{aligned}
\]
where we set $\hat{\mat\Sigma}_x\coloneqq\mat\Sigma+(\tau(\vct x)\sigma_{\vct x})^2\mat I$.
To evaluate the Gaussian distribution on a given view and ray $\vct x$ we need to compute $\vct\mu^\top\mat F_{\vct x}(\mat F_{\vct x}^\top\hat{\mat \Sigma}_x\mat F_{\vct x})^{-1}\mat F_{\vct x}^\top\vct \mu$ and $\det(\mat F_{\vct x}^\top\hat{\mat \Sigma}_x\mat F_{\vct x})$. In order to get rid of the dependency on $\mat F_{\vct x}$ we can rewrite those expressions as follows:
\[
\begin{aligned}
\vct\mu^\top&\mat F_{\vct x}(\mat F_{\vct x}^\top\hat{\mat \Sigma}_x\mat F_{\vct x})^{-1}\mat F_{\vct x}^\top\vct \mu\\
&\stackrel{(a)}{=}\vct\mu^\top\mat F_{\vct x}\mat F_{\vct x}^\top\left[\hat{\mat\Sigma}_x^{-1}-\frac{\hat{\mat\Sigma}_x^{-1}\vct x\vct x^\top\hat{\mat\Sigma}_x^{-1}}{\vct x^\top\hat{\mat\Sigma}_x^{-1}\vct x}\right]\mat F_{\vct x}\mat F_{\vct x}^\top\vct\mu\\
&\stackrel{(b)}{=}\vct\mu^\top(\mat I-\vct x\vct x^\top)\left[\hat{\mat\Sigma}_x^{-1}-\frac{\hat{\mat\Sigma}_x^{-1}\vct x\vct x^\top\hat{\mat\Sigma}_x^{-1}}{\vct x^\top\hat{\mat\Sigma}_x^{-1}\vct x}\right](\mat I-\vct x\vct x^\top)\vct\mu\\
&\stackrel{(c)}{=}\vct\mu^\top\left[\hat{\mat\Sigma}_x^{-1}-\frac{\hat{\mat\Sigma}_x^{-1}\vct x\vct x^\top\hat{\mat\Sigma}_x^{-1}}{\vct x^\top\hat{\mat\Sigma}_x^{-1}\vct x}\right]\vct\mu\\
&\stackrel{(d)}{=}(\vct\mu-\hat\tau(\vct x)\vct x)^\top\hat{\mat\Sigma}_x^{-1}(\vct\mu-\hat\tau(\vct x)\vct x)=\fDray(\vx;\vmu,\hat{\vE}_x)\,,
\end{aligned}
\]
where $\hat \tau(\vct x)\coloneqq\frac{\vct\mu^\top\hat{\mat \Sigma}_x^{-1}\vct x}{\vct x^\top\hat{\mat \Sigma}_x^{-1}\vct x}$, and
\[
\begin{aligned}
\det(\mat F_{\vct x}^\top\hat{\mat \Sigma}_x\mat F_{\vct x})&\stackrel{(a)}{=}\det(\hat{\mat\Sigma}_x)\vct x^\top\hat{\mat\Sigma}_x^{-1}\vct x\,.
\end{aligned}
\]
Equalities (a) follow from \Cref{prop:inverse-block-matrix} by taking $\mat X\coloneqq\mat W^\top\hat{\mat \Sigma}\mat W$, where $\mat W=[\mat F_{\vct x}, \vct x]$, and considering $\mat A\coloneqq\mat F_{\vct x}^\top \hat{\mat \Sigma}_x\mat F_{\vct x}$.
Since $\mat W$ is unitary, we have that $\mat X^{-1}=\mat W^\top\hat{\mat \Sigma}_x^{-1}\mat W$ and $\det(\mat X)=\det(\hat{\mat \Sigma}_x)$. It follows that $\mat D=\mat F_{\vct x}^\top \hat {\mat\Sigma}_x^{-1}\mat F_{\vct x}$, $\vct e=\mat F_{\vct x}^\top \hat {\mat\Sigma}_x^{-1} \vct x$ and $f=\vct x^\top \hat {\mat\Sigma}_x^{-1} \vct x$. Moreover we have that $\vct b=\mat F_{\vct x}^\top \hat {\mat\Sigma}_x \vct x$ and $c=\vct x^\top \hat {\mat\Sigma}_x \vct x$.
Equality (b) follows by noting that columns of $\mat W$ are eigenvectors of $\mat I-\vct x\vct x^\top$ with eigenvalues given by $[1,1,0]$ and therefore $\mat I-\vct x\vct x^\top=\mat F_{\vct x}\mat F_{\vct x}^\top$. Finally, equalities (c) and (d) follow by simple algebraic manipulations.

The final form of $P_\text{MIP}(\vct x)$, up to constant multiplicative factors, is given by
\[
P_\text{MIP}(\vct x)\propto\frac{\tau(\vct x)^2}{\sqrt{\det(\hat{\mat\Sigma}_x)\vct x^\top\hat{\mat \Sigma}_x^{-1}\vct x}}\exp\left(-\frac{1}{2}\fDray(\vx;\vmu,\hat{\vE}_x)\right)\,.
\]

\subsection{Useful results}
\begin{proposition}\label{prop:inverse-block-matrix}
    Assume
    \[
    \mat X\coloneqq\begin{bmatrix}
    \mat A&\vct b\\
    \vct b^\top&c
    \end{bmatrix}\qquad\text{and}\qquad\mat X^{-1}\coloneqq    \begin{bmatrix}
    \mat D&\vct e\\
    \vct e^\top&f
    \end{bmatrix}\,.
    \]
    Then $\mat A^{-1}=\mat D-\frac{\vct e\vct e^\top}{f}$ and $\det(\mat A)=\frac{\det(X)}{c-\vct b^\top\mat A^{-1}\vct b}$.
\end{proposition}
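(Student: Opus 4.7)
The plan is to use the standard Schur complement identities for inverting a symmetric block matrix and to derive both claims by a short algebraic manipulation.

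First, I would write out the classical block inversion formula. Setting $s\coloneqq c-\vct b^\top\mat A^{-1}\vct b$ (the Schur complement of $\mat A$ in $\mat X$), the identity
\[
\begin{bmatrix}\mat A&\vct b\\\vct b^\top&c\end{bmatrix}^{-1}=\begin{bmatrix}\mat A^{-1}+\frac{\mat A^{-1}\vct b\vct b^\top\mat A^{-1}}{s}&-\frac{\mat A^{-1}\vct b}{s}\\-\frac{\vct b^\top\mat A^{-1}}{s}&\frac{1}{s}\end{bmatrix}
\]
can be verified by direct multiplication. Matching blocks with the assumed decomposition of $\mat X^{-1}$ identifies
\[
\mat D=\mat A^{-1}+\frac{\mat A^{-1}\vct b\vct b^\top\mat A^{-1}}{s},\qquad \vct e=-\frac{\mat A^{-1}\vct b}{s},\qquad f=\frac{1}{s}.
\]

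For the first claim, I would simply compute $\vct e\vct e^\top/f$ from these expressions: substituting and simplifying gives $\vct e\vct e^\top/f=\mat A^{-1}\vct b\vct b^\top\mat A^{-1}/s$, which coincides with the rank-one correction appearing in $\mat D$. Hence $\mat D-\vct e\vct e^\top/f=\mat A^{-1}$, as required.

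For the second claim, I would invoke the Schur complement determinant formula $\det(\mat X)=\det(\mat A)\cdot s$, obtained from the block $\mat{LDL}^\top$ factorization of $\mat X$. Rearranging yields $\det(\mat A)=\det(\mat X)/s=\det(\mat X)/(c-\vct b^\top\mat A^{-1}\vct b)$. No step is truly a main obstacle here, since the whole proposition is a re-packaging of well-known block-matrix identities; the only care needed is keeping the signs and the factor of $s$ straight when combining $\vct e$, $f$ and the rank-one correction in $\mat D$.
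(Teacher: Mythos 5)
Your proof is correct and follows essentially the same route as the paper: both rest on the Schur complement identities, with the determinant part being identical. The only cosmetic difference is that the paper extracts the relations $\mat D=\mat A^{-1}-\mat A^{-1}\vct b\vct e^\top$ and $\vct e/f=-\mat A^{-1}\vct b$ directly from the block equations of $\mat X\mat X^{-1}=\mat I$, whereas you quote the full block-inversion formula and match blocks; the substance is the same.
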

\begin{proof}
    Since $\mat X\mat X^{-1}=\mat I$ we have that $\mat A\mat D+\vct b\vct e^\top=\mat I$ and therefore $\mat D=\mat A^{-1}-\mat A^{-1}\vct b\vct e^\top$. 
    We have also that $\mat A\vct e+\vct b f=\vct 0$ and therefore $\vct e/f=-\mat A^{-1}\vct b$. Hence by substituting we have $\mat D=\mat A^{-1}+\vct e\vct e^\top/f$, from which the result about the inverse of $\mat A$ follows. 

    By properties of the determinant we have that $\det(\mat X)=\det(\mat A)(c-\vct b^\top \mat A^{-1}\vct b)$, from which the result about the determinant of $\mat A$ trivially follows.
\end{proof}
\section{Additional quantitative experiments}\label{sec:quantitative}
\subsection{GS vs VKGS}\label{ss:GSvsVKGS}
In~\cref{tab:quantitative_gs}, we report the results obtained by GS versus the Vulkan counterpart VKGS. We have split the table into two sections to distinguish scenes from MipNerf360 (top) and Tanks\&Temples (bottom). For each scene, we report the size of the model in terms of number of primitives and report left-to-right speed comparisons in terms of FPS and quality metrics in terms of PSNR, SSIM and LPIPS, averaged over all the test views. We disregard by now the last two columns in the table.
Staring from the rendering speed, it is crystal clear that the Vulkan implementation outclasses the CUDA-based from GS with $2\times$ average speedup, despite GS including the most recent optimizations in the renderer. Except for a few cases, the quality metrics are not significantly different. However, the fact that there are differences
indicates potential misalignment between the implementations and the results favor GS because the model has been trained with the same renderer. We noticed that the bigger discrepancies happen on models that suffer from a lot of big semi-transparent floater.
\begin{table}[thb]
    \centering
    \setlength{\tabcolsep}{2pt}
    \resizebox{\columnwidth}{!}{
    \begin{tabular}{c||c||cc||cc|cc|cc}\toprule
        {\footnotesize on RTX2080}&&\multicolumn{2}{c}{FPS $\uparrow$ } &\multicolumn{2}{c}{PSNR $\uparrow$}&\multicolumn{2}{c}{SSIM $\uparrow$}&\multicolumn{2}{c}{LPIPS $\downarrow$}\\
        Scene&$N$&GS&VKGS&GS&VKGS&GS&VKGS&GS&VKGS\\\midrule\midrule
        bicycle&6.13M&52&\bf 144&25.24&24.99&0.768&0.752&0.229&0.243\\
bonsai&1.24M&105&\bf 249&31.98&31.47&0.938&0.926&0.253&0.221\\
counter&1.22M&82&\bf 193&28.69&28.57&0.905&0.894&0.262&0.233\\
flowers&3.64M&95&\bf 171&21.52&21.45&0.600&0.594&0.366&0.367\\
garden&5.83M&58&\bf 126&27.41&26.98&0.867&0.852&0.119&0.127\\
kitchen&1.85M&65&\bf 151&30.32&30.10&0.921&0.908&0.158&0.156\\
room&1.59M&83&\bf 223&30.63&30.70&0.913&0.900&0.289&0.260\\
stump&4.96M&75&\bf 168&26.55&26.25&0.772&0.761&0.244&0.253\\
treehill&3.78M&81&\bf 169&22.49&22.41&0.634&0.621&0.367&0.37\\
\midrule
barn&0.89M&129&\bf 216&29.06&26.02&0.880&0.860&0.201&0.238\\
caterpillar&1.07M&129&\bf 212&24.36&22.16&0.824&0.796&0.234&0.264\\
ignatius&2.25M&108&\bf 186&22.21&21.10&0.823&0.797&0.187&0.210\\
meetingroom&1.05M&122&\bf 258&26.23&23.15&0.893&0.854&0.209&0.239\\
truck&2.54M&105&\bf 170&25.19&24.21&0.876&0.852&0.178&0.168\\
\bottomrule
    \end{tabular}
    }
    
    \caption{Speed and quality comparison between GS and VKGS on scenes from MipNeRF360 and Tanks\&Temples. The last two columns provide speed comparison of VKGS versus VKRayGS on MipNeRF360 using the same GS scene model.}
    \label{tab:quantitative_gs}
\end{table}

\section{Qualitative examples}\label{sec:qualitative}
In~\cref{fig:qualitative_1,fig:qualitative_2,fig:qualitative_3} we provide the rendering of the first test image of all scenes we evaluated on. We report results obtained with GOF and our VKRayGS. The goal is to validate that visually there is barely any visible difference between the two renderings, while our method being order of magnitude faster. Clearly, given that the quality scores do not perfectly match, there are some differences that are mainly due to discrepancies between the Vulkan-based rendering pipeline and the one from GOF. \Eg we might have different ways of culling primitives that introduce differences mainly at the borders, or different ways of addressing out-of-bound colors, which could make saturated areas darker. Aligning differences between the Vulkan-based renderer and GOF are beyond our contributions, and thus out of our paper's scope.
\begin{figure*}
    \centering
    \includegraphics[width=.75\linewidth]{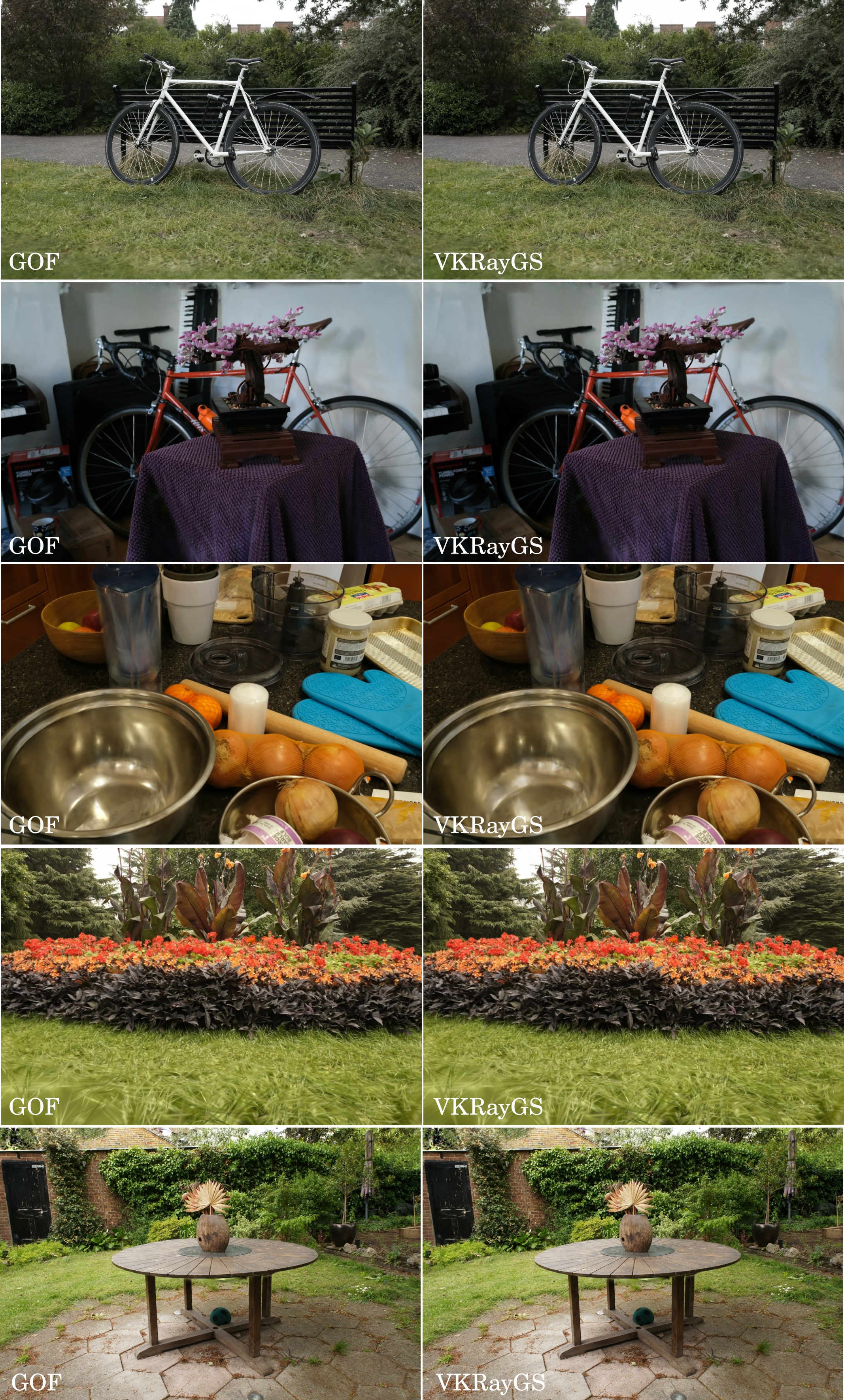}
    \caption{First test images of the MipNerf360 scenes \emph{bicycle}, \emph{bonsai}, \emph{counter}, \emph{flowers} and \emph{garden}, rendered by GOF and our method VKRayGS.}
    \label{fig:qualitative_1}
\end{figure*}
\begin{figure*}
    \centering
    \includegraphics[width=.75\linewidth]{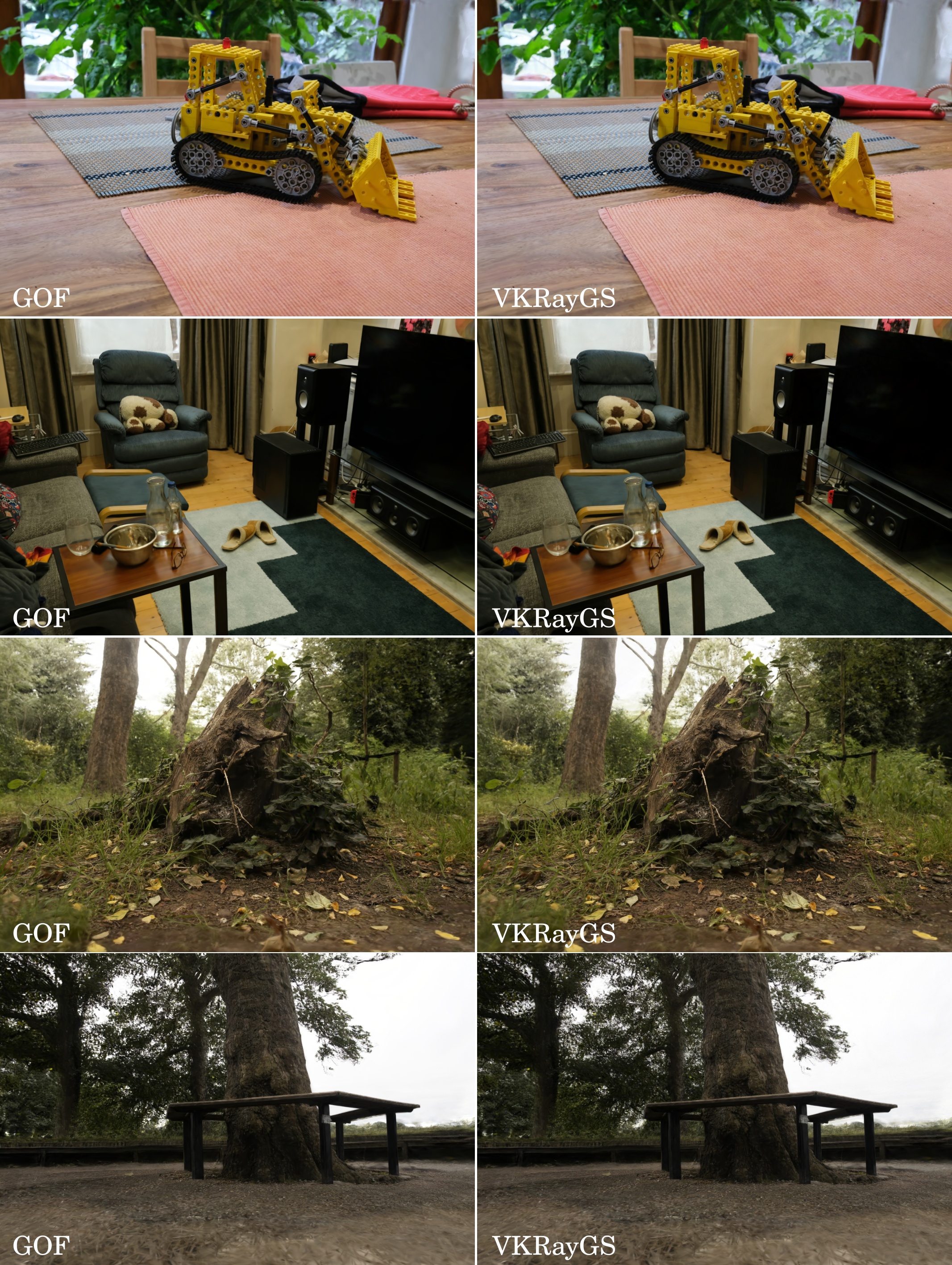}
    \caption{First test images of the MipNerf360 scenes \emph{kitchen}, \emph{room}, \emph{stump} and \emph{treehill}, rendered by GOF and our method VKRayGS.}
    \label{fig:qualitative_2}
\end{figure*}
\begin{figure*}
    \centering
    \includegraphics[width=.75\linewidth]{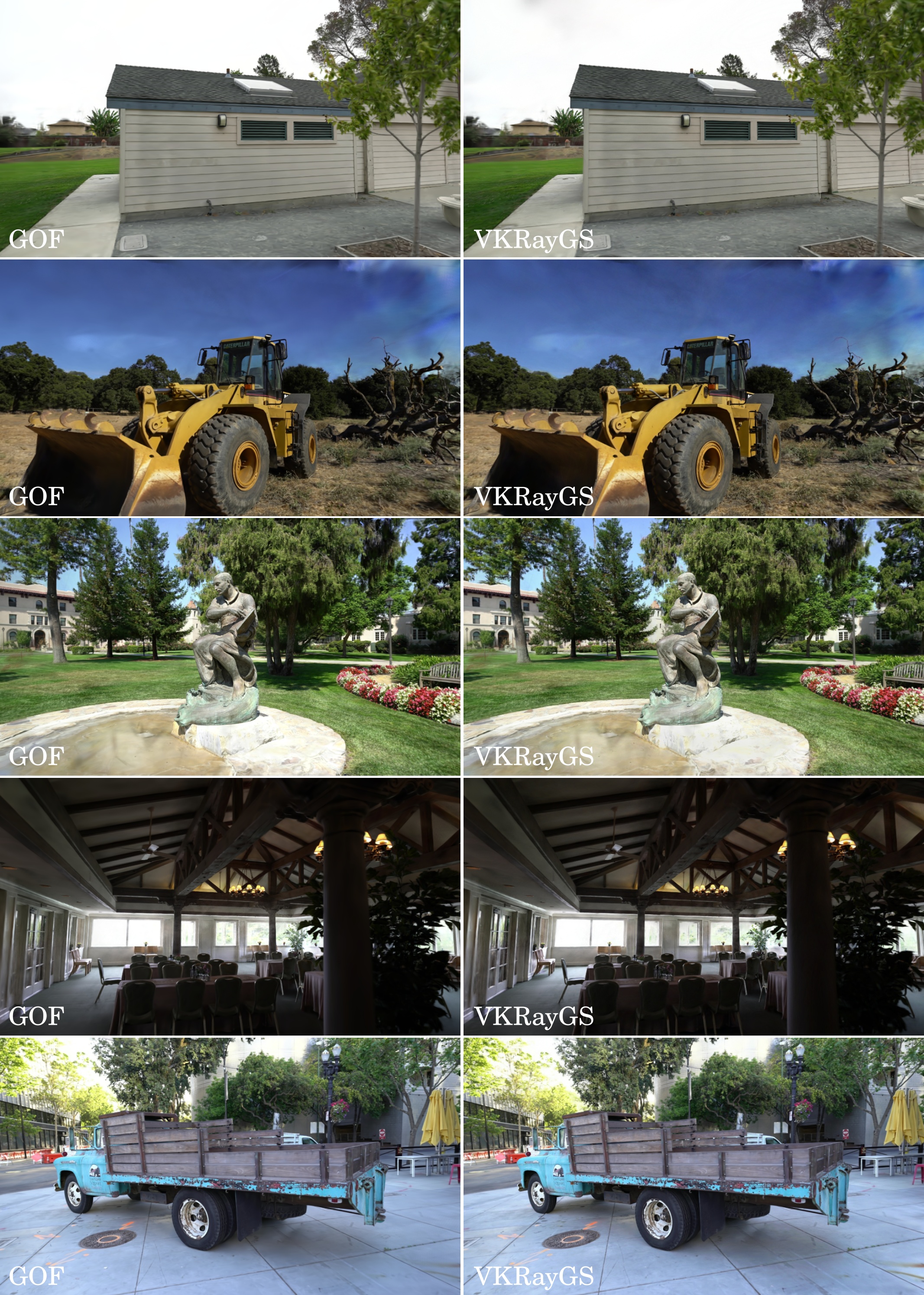}
    \caption{First test images of the Tanks\&Temples scenes \emph{barn}, \emph{caterpillar}, \emph{ignatius}, \emph{meetingroom} and \emph{truck}, rendered by GOF and our method VKRayGS.}
    \label{fig:qualitative_3}
\end{figure*}

\end{document}